\newtheorem{theorem}{Theorem}
\newcommand{\principle}{VSML}
\newcommand{\principlefull}{Variable Shared Meta Learning}
\newcommand{\sapproach}{VSML RNN}
\newcommand{\approach}{VSML RNNs}
\newcommand{\sapproachfull}{VSML RNN}
\newcommand{\approachfull}{VSML RNNs}
\newcommand{\fws}{FWPs / LLRs}
\newcommand{\metarnn}{Meta RNN}
\newcommand{\metarnns}{Meta RNNs}
\DeclareMathOperator{\R}{\mathbb{R}}
\newcommand*\circled[1]{\tikz[baseline=(char.base)]{
            \node[shape=circle,draw,inner sep=0.5pt] (char) {#1};}}
\newcommand{\fmsg}{f_{\overrightarrow m}}
\newcommand{\bmsg}{f_{\overleftarrow m}}
\newcommand{\sfmsg}{\overrightarrow m}
\newcommand{\sbmsg}{\overleftarrow m}
\newcommand{\intRNN}{\circled{A}}
\newcommand{\intMP}{\circled{B}}
\newcommand{\intNN}{\circled{C}}
\definecolor{w}{RGB}{148, 33, 146}
\definecolor{b}{RGB}{148, 33, 146}
\definecolor{x}{RGB}{0, 118, 186}
\definecolor{y}{RGB}{29, 177, 0}
\definecolor{e}{RGB}{238, 34, 12}
\definecolor{eo}{RGB}{255, 147, 0}
\definecolor{label}{RGB}{0, 0, 255}
\definecolor{prediction}{RGB}{55, 126, 34}
\title{Meta Learning Backpropagation And Improving It}
\author{%
  Louis Kirsch$^{1}$,  J\"urgen Schmidhuber$^{1,2}$\\
  $^1$The Swiss AI Lab IDSIA, University of Lugano (USI) \& SUPSI, Lugano, Switzerland \\
  $^2$King Abdullah University of Science and Technology (KAUST), Thuwal, Saudi Arabia \\
  \texttt{\{louis, juergen\}@idsia.ch} \\
}
\begin{document}

\maketitle

\begin{abstract}
Many concepts have been proposed for meta learning with neural networks (NNs), e.g., NNs that learn to reprogram fast weights, Hebbian plasticity, learned learning rules, and meta recurrent NNs.
Our \emph{Variable Shared Meta Learning (VSML)}
unifies the above and demonstrates that simple weight-sharing and sparsity in an NN is sufficient to express powerful learning algorithms (LAs) in a reusable fashion.
A simple implementation of VSML where the weights of a neural network are replaced by tiny LSTMs allows for implementing the backpropagation LA solely by running in forward-mode.
It can even meta learn new LAs that differ from online backpropagation and generalize to datasets outside of the meta training distribution without explicit gradient calculation.
Introspection reveals that our meta learned LAs learn through fast association in a way that is qualitatively different from gradient descent.
\end{abstract}

\section{Introduction}\label{introduction}

The shift from standard machine learning to meta learning involves learning the learning algorithm (LA) itself, reducing the burden on the human designer to craft useful learning algorithms~\citep{Schmidhuber:87}.
Recent meta learning has primarily focused on generalization from training tasks to similar test tasks, e.g., few-shot learning~\citep{Finn2017}, or from training environments to similar test environments~\citep{Houthooft2018}.
This contrasts human-engineered LAs that generalize across a wide range of datasets or environments.
Without such generalization, meta learned LAs can not entirely replace human-engineered variants.
Recent work demonstrated that meta learning can also successfully generate more general LAs that generalize across wide spectra of environments~\citep{kirsch2020improving,alet2020metalearning,Oh2020}, e.g., from toy environments to Mujoco and Atari.
Unfortunately, however, many recent approaches still rely on a large number of human-designed and unmodifiable inner-loop components such as backpropagation.

Is it possible to implement modifiable versions of backpropagation or related algorithms as part of the end-to-end differentiable activation dynamics of a neural net (NN), instead of inserting them as separate fixed routines?
Here we propose the {\principlefull} (\principle) principle for this purpose.
It introduces a novel way of using sparsity and weight-sharing in NNs for meta learning.
We build on the arguably simplest neural meta learner, the meta recurrent neural network (Meta RNN)~\citep{Hochreiter2001,Duan2016,Wang2016}, by replacing the weights of a neural network with tiny LSTMs.
The resulting system can be viewed as many RNNs passing messages to each other, or as one big RNN with a sparse shared weight matrix, or as a system learning each neuron's functionality and its LA.
{\principle} generalizes the principle behind end-to-end differentiable fast weight programmers~\citep{Schmidhuber1992a,Schmidhuber1993a,Ba2016,Schlag}, hyper networks~\citep{Ha2016}, learned learning rules~\citep{Bengio1992OnRule,Gregor2020,Randazzo2020}, and hebbian-like synaptic plasticity~\citep{Schmidhuber:91fastweights,Schmidhuber1993a,Miconi2018DifferentiableBackpropagation,Miconia,Najarro2020}.
Our mechanism, {\principle}, can implement backpropagation solely in the forward-dynamics of an RNN.
Consequently, it enables meta-optimization of backprop-like algorithms.
We envision a future where novel methods of credit assignment can be meta learned while still generalizing across vastly different tasks.
This may lead to improvements in sample efficiency, memory efficiency, continual learning, and others.
As a first step, our system meta learns online LAs from scratch that frequently learn faster than gradient descent and generalize to datasets outside of the meta training distribution (e.g., from MNIST to Fashion MNIST).
{\principle} is the first neural meta learner without hard-coded backpropagation that shows such strong generalization.

\section{Background}\label{sec:background}

Deep learning-based meta learning that does not rely on fixed gradient descent in the inner loop has historically fallen into two categories,
1) Learnable weight update mechanisms that allow for changing the parameters of an NN to implement a learning rule (\fws), and
2) Learning algorithms implemented in black-box models such as recurrent neural networks (\metarnns).

\paragraph{Fast weight programmers \& Learned learning rules (\fws)}
In a standard NN, the weights are updated by a fixed LA.
This framework can be extended to meta learning by defining an explicit architecture that allows for modifying these weights.
This weight-update architecture augments a standard NN architecture.
NNs that generate or change the weights of another or the same NN are known as fast weight programmers (FWPs)~\citep{Schmidhuber:91fastweights,Schmidhuber1992a,Schmidhuber1993a,Ba2016,Schlag}, hypernetworks~\citep{Ha2016}, NNs with synaptic plasticity~\citep{Miconi2018DifferentiableBackpropagation,Miconia,Najarro2020} or learned learning rules~\citep{Bengio1992OnRule,Gregor2020,Randazzo2020}.
Often these architectures make use of local Hebbian-like update rules or outer-products, and we summarize this category as {\fws}.
In {\fws} the variables $V_L$ that are subject to learning are the weights of the network, whereas the meta-variables $V_M$ that implement the LA are defined by the weight-update architecture.
Note that the dimensionality of $V_L$ and $V_M$ can be defined independently of each other and often $V_M$ are reused in a coordinate-wise fashion for $V_L$ resulting in $|V_L| \gg |V_M|$, where $|\cdot|$ is the number of elements.

\paragraph{Black-box learning in activations (\metarnns)}
It was shown that an RNN such as LSTM can learn to implement an LA~\citep{Hochreiter2001} when the reward or error is given as an input~\citep{Schmidhuber:93selfreficann}.
After meta training, the LA is encoded in the weights of this RNN and determines learning during meta testing.
The activations serve as the memory used for the LA solution.
We refer to this as Meta RNNs~\citep{Hochreiter2001,Duan2016,Wang2016}  (Also sometimes referred to as memory-based meta learning.).
They are conceptually simpler than {\fws} as no additional weight-update rules with many degrees of freedom need to be defined.
In {\metarnns} $V_L$ are the RNN activations and $V_M$ are the parameters for the RNN.
Note that an RNN with $N$ neurons will yield $|V_L| = O(N)$ and $|V_M| = O(N^2)$~\citep{Schmidhuber1993a}.
This means that the LA is largely overparameterized whereas the available memory for learning is very small, making this approach prone to overfitting~\citep{kirsch2020improving}.
As a result, the RNN parameters often encode task-specific solutions instead of generic LAs.
Meta learning a simple and generalizing LA would benefit from $|V_L| \gg |V_M|$.
Previous approaches have tried to mend this issue by adding architectural complexity through additional memory mechanisms~\citep{sun1991neural,mozer1993connectionist,Santoro,Mishra,schlag2021learning}.

\section{{\principlefull} (\principle)}\label{sec:approach}

In {\principle} we build on the simplicity of {\metarnns} while retaining $|V_L| \gg |V_M|$ from {\fws}.
We do this by reusing the same few parameters $V_M$ many times in an RNN (via variable sharing) and introducing sparsity in the connectivity.
This yields several interpretations for {\principle}:
\begin{enumerate}
    \item[\intRNN] \textbf{VSML as a single {\metarnn} with a sparse shared weight matrix (\autoref{fig:as-mrnn}).}
    The most general description. 
    \item[\intMP] \textbf{VSML as message passing between RNNs (\autoref{fig:vsmrnn}).}
    We choose a simple sharing and sparsity scheme for the weight matrix such that it corresponds to multiple RNNs with shared parameters that exchange information.
    \item[\intNN] \textbf{VSML as complex neurons with learned updates (\autoref{fig:as-nn}).}
    When choosing a specific connectivity between RNNs, states / activations $V_L$ of these RNNs can be interpreted as the weights of a conventional NN, consequently blurring the distinction between a weight and an activation.
\end{enumerate}

\begin{figure*}[t]
    \centering
    \hfill
    \begin{subfigure}[b]{0.25\textwidth}
        \centering
        \includegraphics[width=0.75\textwidth]{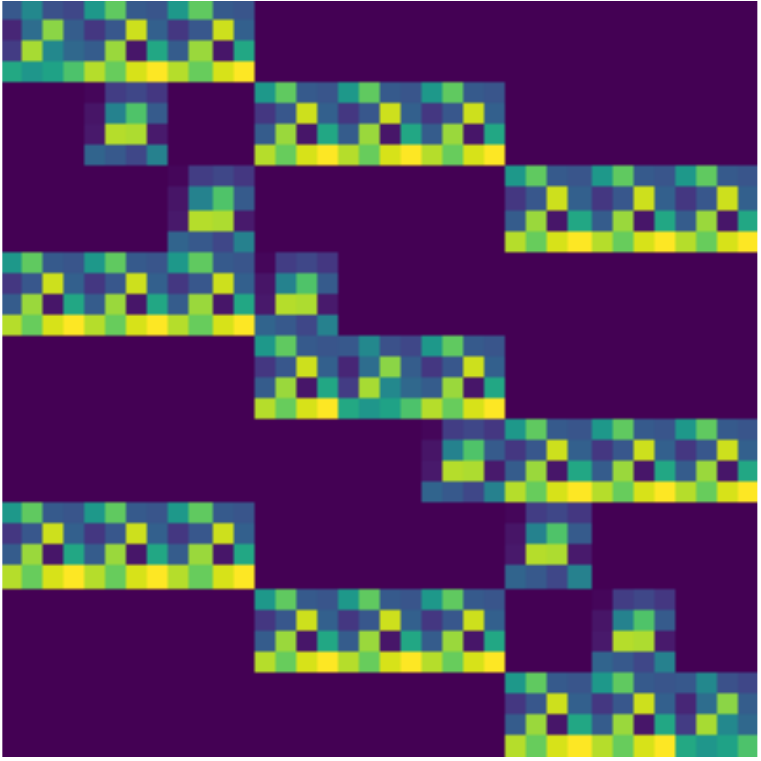}
        \caption{Viewed as a single RNN (structured weight matrix)}
        \label{fig:as-mrnn}
    \end{subfigure}
    \hfill
    \begin{subfigure}[b]{0.25\textwidth}
        \centering
        \includegraphics[width=0.8\textwidth]{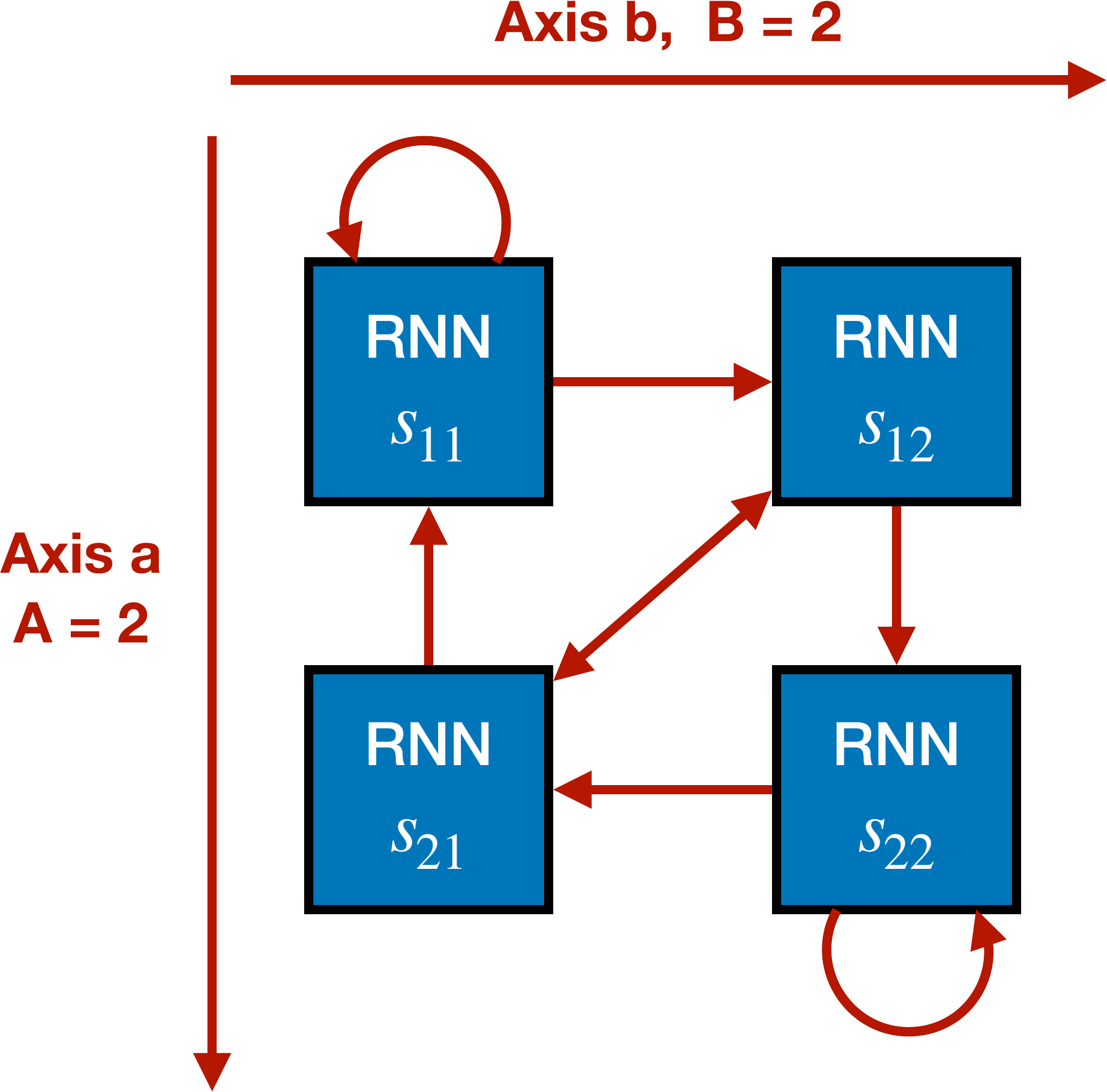}
        \caption{One {\sapproach} = many sub-RNNs}
        \label{fig:vsmrnn}
    \end{subfigure}
    \hfill
    \begin{subfigure}[b]{0.40\textwidth}
        \centering
        \includegraphics[width=0.95\textwidth]{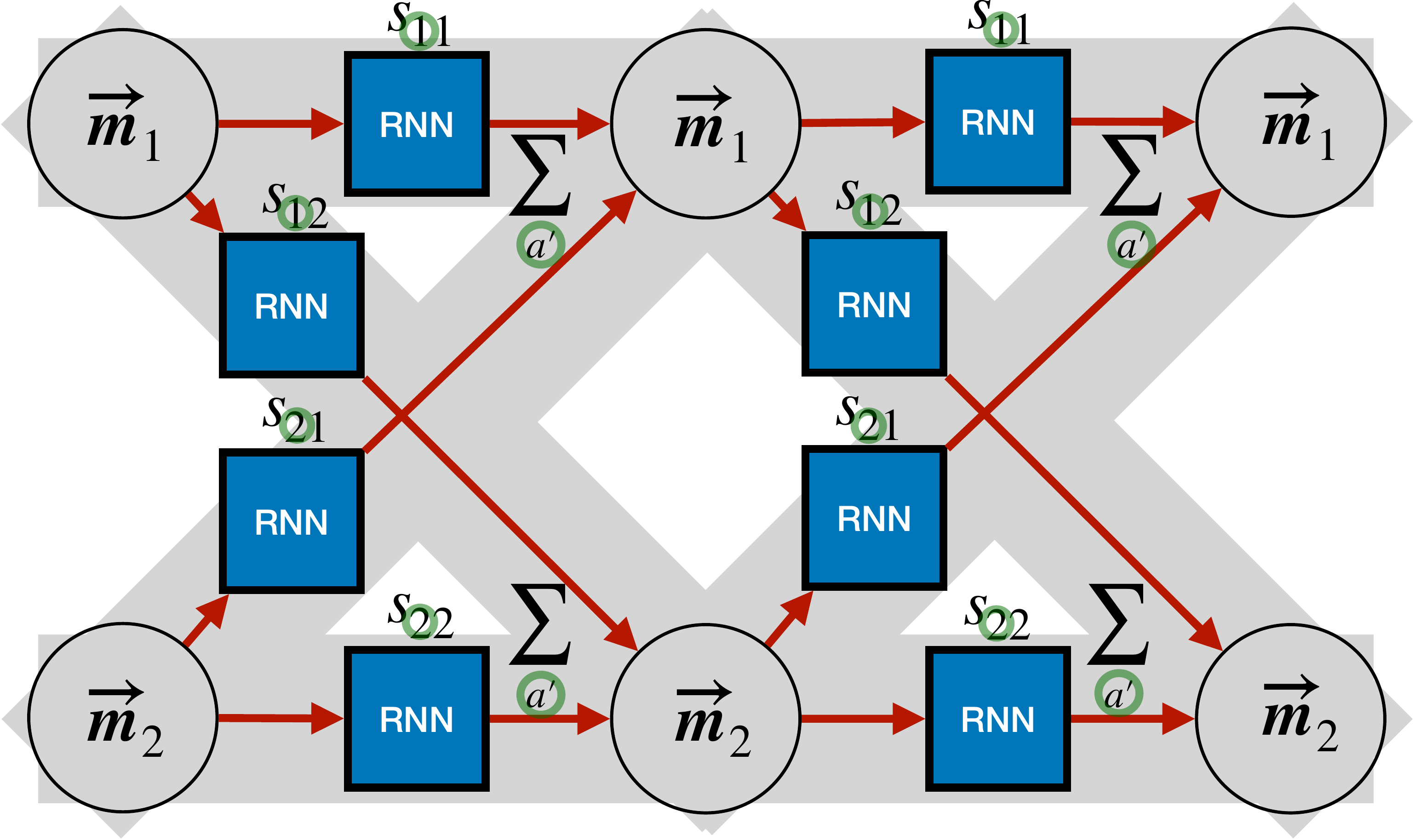}
        \caption{Viewed as an NN with complex neurons}
        \label{fig:as-nn}
    \end{subfigure}
    \hfill
    \caption{Different perspectives on {\principle}:
    (a) A single {\metarnn}~\citep{Hochreiter2001} where entries in the weight matrix are shared or zero.
    (a)
    {\principle} consists of many sub-RNNs with shared parameters $V_M$ passing messages between each other.
    (c) {\principle} implements an NN with complex neurons (here 2 neurons).
    $V_M$ determines the nature of weights, how these are used in the neural computation, and the LA by which those are updated.
    Each weight $w_{ab} \in \R$ is represented by the multi-dimensional RNN state $s_{ab} \in \R^N$.
    Neuron activations correspond to messages $\protect\sfmsg$ passed between sub-RNNs.}
    \label{fig:vsmrnn-perspectives}
\end{figure*}

\paragraph{Introducing variable sharing to Meta RNNs}

We begin by formalizing {\metarnns} which often use multiplicative gates such as the  LSTM~\citep{Gers:2000nc,Hochreiter:97lstm} or its variant GRU~\citep{cho14GRU}.
For notational simplicity, we consider a vanilla RNN.
Let $s \in \R^N$ be the hidden state of an RNN.
The update for an element $j \in \{1, \ldots, N\}$ is given by
\begin{equation}\label{eq:meta_rnn}
    s_j \leftarrow f_{\textrm{RNN}}(s)_j = \sigma(\sum_i s_i W_{ij}),
\end{equation}
where $\sigma$ is a non-linear activation function, $W \in \R^{N \times N}$, and the bias is omitted for simplicity.
We also omit inputs by assuming a subset of $s$ to be externally provided.
Each application of \autoref{eq:meta_rnn} reflects a single time tick in the RNN.

We now introduce variable sharing (reusing $W$) into the RNN by duplicating the computation along two axes of size $A, B$ (here $A = B$, which will later be relaxed) giving $s \in \R^{A \times B \times N}$.
For $a \in \{1, \ldots, A\}, b \in \{1, \ldots, B\}$ we have
\begin{equation}
    s_{abj} \leftarrow f_{\textrm{RNN}}(s_{ab})_j = \sigma(\sum_i s_{abi} W_{ij}).
\end{equation}
This can be viewed as multiple RNNs arranged on a 2-dimensional grid, with shared parameters that update independent states.
Here, we chose a particular arrangement (two axes) that will facilitate the interpretation {\intNN} of RNNs as weights.

\paragraph{VSML as message passing between RNNs}
The computation so far describes $A \cdot B$ independent RNNs.
We connect those by passing messages (interpretation {\intMP})
\begin{equation}\label{eq:message_passing}
    s_{ab} \leftarrow f_{\textrm{RNN}}(s_{ab}, \sfmsg_a),
\end{equation}
where the message $\sfmsg_b = \sum_{a'} \fmsg(s_{a'b})$ with $b \in \{1,\ldots,A=B\}$, $\fmsg: \R^{N} \to \R^{N'}$ is fed as an additional input to each RNN.
This is related to Graph Neural Networks~\citep{sperduti1994encoding,wu2020comprehensive}.
Summing over the axis $A$ (elements $a'$) corresponds to an RNN connectivity mimicking those of weights in an NN (to facilitate interpretation {\intNN}).
We emphasise that other schemes based on different kinds of message passing and graph connectivity are possible.
For a simple $\fmsg$ defined by the matrix $C \in \R^{N \times N}$, we may equivalently write
\begin{equation}\label{eq:interact}
    s_{abj} \leftarrow \sigma ( \sum_i s_{abi} W_{ij} + \sum_{a'} \fmsg(s_{a'a})_j)
    = \sigma ( \sum_i s_{abi} W_{ij} + \sum_{a',i} s_{a'ai} C_{ij}).
\end{equation}
This constitutes the minimal version of {\principle} with $V_M := (W, C)$ and is visualized in \autoref{fig:vsmrnn}.

\paragraph{VSML as a Meta RNN with a sparse shared weight matrix}

It is trivial to see that with $A = 1$ and $B = 1$ we obtain a single RNN and \autoref{eq:interact} recovers the original Meta RNN \autoref{eq:meta_rnn}.
In the general case, we can derive an equivalent formulation that corresponds to a single standard RNN with a single matrix $\tilde W$ that has entries of zero and shared entries
\begin{equation}\label{eq:sparse-shared-equiv}
    s_{abj} \leftarrow \sigma(\sum_{c,d,i} s_{cdi} \tilde W_{cdiabj}),
\end{equation}
where the six axes can be flattened to obtain the two axes.
For \autoref{eq:interact} and \autoref{eq:sparse-shared-equiv} to be equivalent, $\tilde W$ must satisfy (derivation in \autoref{sec:rnn-equiv-derivation})
\begin{equation}\label{eq:sparse-shared}
    \tilde W_{cdiabj} = \begin{cases}
        C_{ij}, & \text{if } d = a \land ( d \neq b \lor c \neq a ). \\
        W_{ij}, & \text{if } d \neq a \land d = b \land c = a. \\
        C_{ij} + W_{ij}, & \text{if } d = a \land d = b \land c = a. \\
        0, & \text{otherwise.}
    \end{cases}
\end{equation}
This corresponds to interpretation {\intRNN} with the weight matrix visualized in \autoref{fig:as-mrnn}.
To distinguish between the single sparse shared RNN and the connected RNNs, we now call the latter \emph{sub-RNNs}.

\paragraph{VSML as complex neurons with learned updates}

The arrangement and connectivity of the sub-RNNs as described in the previous paragraphs corresponds to that of weights in a standard NN.
Thus, in interpretation {\intNN}, {\principle} can be viewed as defining complex neurons where each sub-RNN corresponds to a weight in a standard NN as visualized in \autoref{fig:as-nn}.
All these sub-RNNs share the same parameters but have distinct states.
The current formulation corresponds to a single NN layer that is run recurrently.
We will generalize this to other architectures in the next section.
$A$ corresponds to the dimensionality of the inputs and $B$ to that of the outputs in that layer.

\begin{wrapfigure}{r}{0.7\textwidth}
    \vspace{-3mm}
    \centering
    \includegraphics[width=0.7\textwidth]{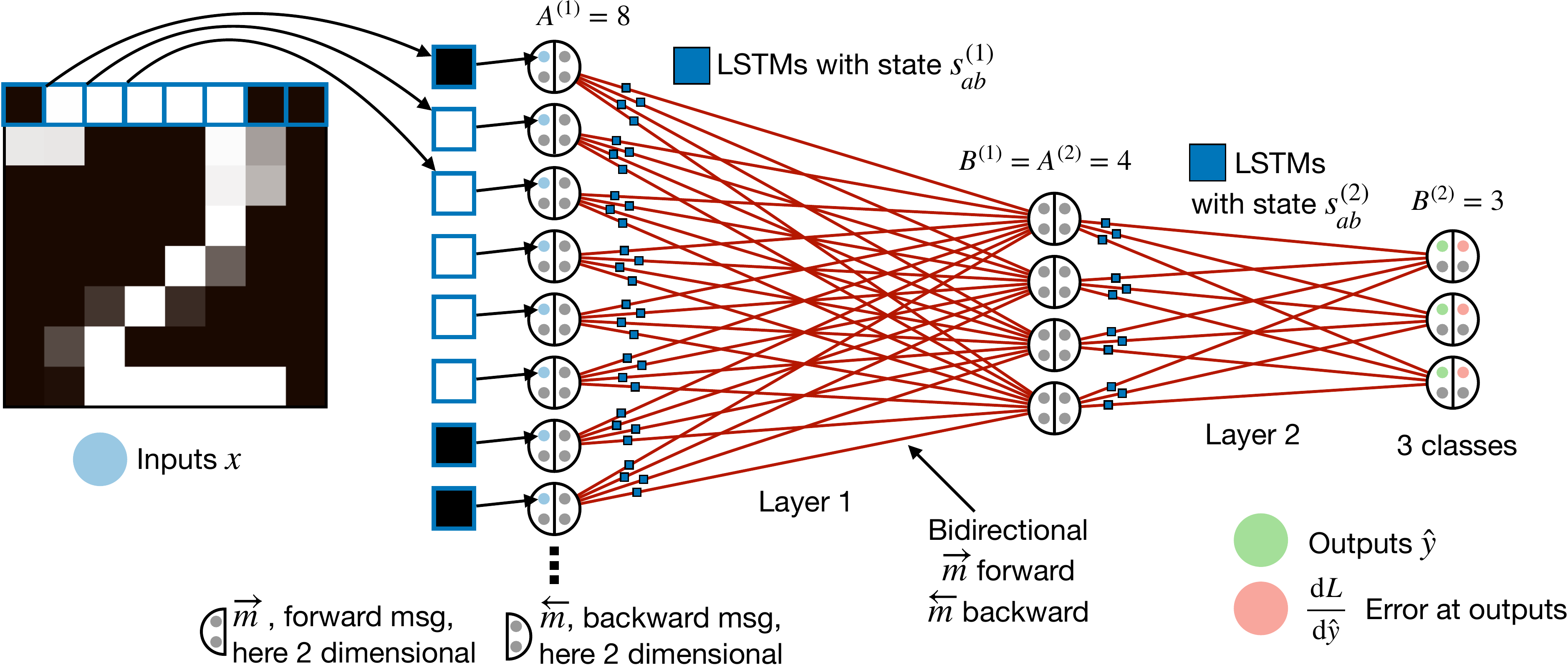}
    \caption{
    The neural interpretation of {\principle} replaces all weights of a standard NN with tiny LSTMs using shared parameters (resembling complex neurons).
    This allows these LSTMs to define both the neural forward computation as well as the learning algorithm that determines how the network is updated.
    Information flows forward and backward in the network through multi-dimensional messages $\protect\sfmsg$ and $\protect\sbmsg$, generalizing the dynamics of an NN trained using backpropagation.}
    \label{fig:vsml-mnist-arch}
    \vspace{-3mm}
\end{wrapfigure}
The role of weights in a standard neural network is now assigned to the states of RNNs.
This allows these RNNs to define both the neural forward computation as well as the learning algorithm that determines how the network is updated (where the mechanism is shared across the network).
In the case of backpropagation, this would correspond to the forward and backward passes being implemented purely in the recurrent dynamics.
We will demonstrate the practical feasibility of this in \autoref{sec:backprop}.
The emergence of RNN states as weights quickly leads to confusing terminology when RNNs have `meta weights'.
Instead, we simply refer to meta variables $V_M$ and learned variables $V_L$.

With this interpretation, {\principle} can be seen as a generalization of learned learning rules~\citep{Bengio1992OnRule,Gregor2020,Randazzo2020} and Hebbian-like differentiable mechanisms or fast weights more generally~\citep{Schmidhuber:91fastweights,Schmidhuber1993a,Miconi2018DifferentiableBackpropagation,Miconia} where RNNs replace explicit weight updates.
In standard NNs, weights and activations have multiplicative interactions.
For {\approach} to mimic such computation we require multiplicative interactions between parts of the state $s$.
Fortunately, LSTMs already incorporate this through gating and can be directly used in place of RNNs.

\paragraph{Stacking {\approach} and feeding inputs}

To get a structure similar to one of the non-recurrent deep feed-forward architectures (FNNs), we stack multiple {\approach} where their states are untied and their parameters are tied.%
\footnote{The resultant architecture as a whole is still recurrent.
Note that even standard FNNs are recurrent if the LA (backpropagation) is taken into account.}
This is visualized with two layers in \autoref{fig:vsml-mnist-arch} where the states $s^{(2)}$ of the second column of sub-RNNs are distinct from the first column $s^{(1)}$.
The parameters $A^{(k)}$ and $B^{(k)}$ describing layer sizes can then be varied for each layer $k \in \{1, \ldots, K\}$ constrained by $A^{(k)} = B^{(k-1)}$.
The updated \autoref{eq:message_passing} with distinct layers $k$ is given by $s_{ab}^{(k)} \leftarrow f_{\textrm{RNN}}(s_{ab}^{(k)}, \sfmsg_a^{(k)})$ where $\sfmsg_b^{(k+1)} := \sum_{a'} \fmsg(s_{a'b}^{(k)})$ with $b \in \{1,\ldots,B^{(k)}=A^{(k+1)}\}$.
\begin{wrapfigure}{r}{0.5\textwidth}
    \centering
    \includegraphics[width=0.5\textwidth]{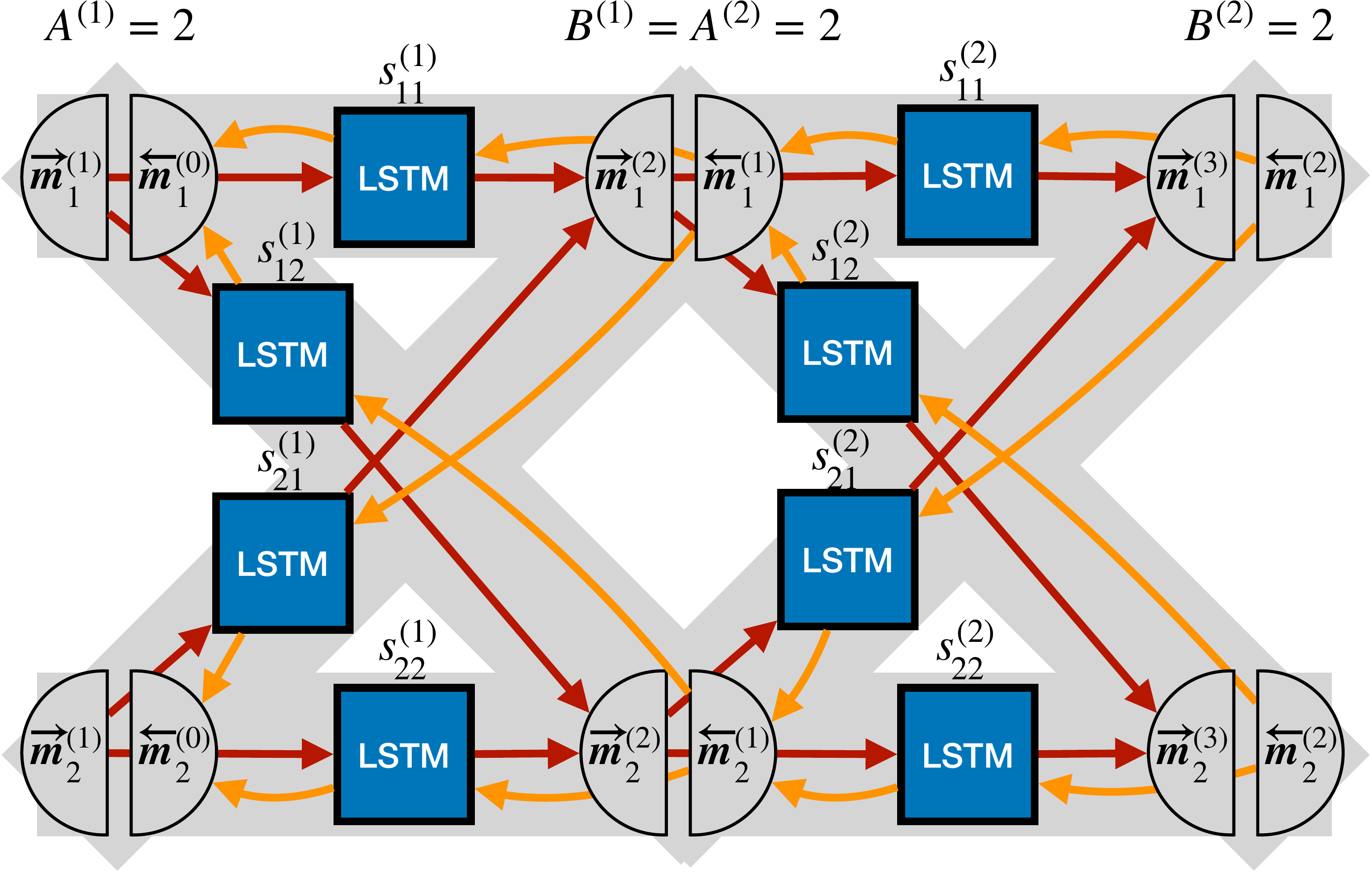}
    \caption{VSML with forward messages $\protect\sfmsg$ and backward messages $\protect\sbmsg$ to form a two-layer NN with shared LSTM parameters but distinct states.}
    \label{fig:vsml-fwd-bwd}
    \vspace{-5mm}
\end{wrapfigure}
To prevent information from flowing only forward in the network, we use an additional backward message
\begin{equation}\label{eq:message-forward-backward}
s_{ab}^{(k)} \leftarrow f_{RNN}(s_{ab}^{(k)}, \sfmsg_a^{(k)}, \sbmsg_b^{(k)}),
\end{equation}
where $\sbmsg_a^{(k-1)} := \sum_{b'} \bmsg(s_{ab'}^{(k)})$ with $a \in \{1,\ldots,A^{(k)}=B^{(k-1)}\}$ (visualized in \autoref{fig:vsml-fwd-bwd}).
The backward transformation is given by $\bmsg: \R^N \to \R^{N''}$.

Similarly, other neural architectures can be explicitly constructed (e.g. convolutional NNs, \autoref{sec:cnns}).
Some architectures may be learned implicitly if positional information is fed into each sub-RNN (\autoref{sec:training_details}).
We then update all states $s^{(k)}$ in sequence $1, \ldots, K$ to mimic sequential layer execution.
We may also apply multiple RNN ticks for each layer $k$.

To provide the {\sapproach} with data, each time we execute the operations of the first layer, a single new datum $x \in \R^{A(1)}$ (e.g. one flattened image) is distributed across all sub-RNNs.
In our present experiments, we match the axis $A(1)$ to the input datum dimensionality such that each dimension (e.g., pixel) is fed to different RNNs.
This corresponds to initializing the forward message $\sfmsg_{a1}^{(1)} := x_a$ (padding $\sfmsg$ with zeros if necessary).
Similarly, we read the output $\hat y \in \R^{B(K)}$ from $\hat y_a := \sfmsg_{a1}^{(K+1)}$.
Finally, we feed the error $e \in \R^{B(K)}$ at the output such that $\sbmsg_{b1}^{(K)} := e_b$.
See \autoref{fig:vsml-mnist-arch} for a visualization.
Alternatively, multiple input or output dimensions could be patched together and fed into fewer sub-RNNs.

\subsection{Meta learning general-purpose learning algorithms from scratch}\label{sec:meta-scratch}

Having formalized {\principle}, we can now use end-to-end meta learning to create LAs from scratch in \autoref{alg:meta_training}.
We simply optimize the LSTM parameters $V_M$ to minimize the sum of prediction losses over many time steps starting with random states $V_L := \{s_{ab}^{(k)}\}$.
We focus on meta learning online LAs where one example is fed at a time as done in Meta RNNs~\citep{Hochreiter2001,Wang2016,Duan2016}.
Meta training may be performed using end-to-end gradient descent or gradient-free optimization such as evolutionary strategies~\citep{Wierstra2011,Salimans2017}.
The latter is significantly more efficient on VSML compared to standard NNs due to the small parameter space $V_M$.
Crucially, during meta testing, no explicit gradient descent is used.

\begin{algorithm}[H]
    \centering	
    \begin{algorithmic}	
        \Require Dataset(s) $D = \{(x_i, y_i)\}$
        \State $V_M \leftarrow$ initialize LSTM parameters
        \While{meta loss has not converged} \Comment{Outer loop in parallel over datasets $D$}
            \State $V_L = \{s_{ab}^{(k)}\} \leftarrow$ initialize LSTM states $\quad \forall a,b,k$
            \For{$(x, y) \in \{(x_1, y_1), \ldots, (x_T, y_T)\} \subset D$} \Comment{Inner loop over $T$ examples}
                \State $\sfmsg^{(1)}_{a1} := x_a \quad \forall a$ \Comment{Initialize from input image x}
                \For{$k \in \{1, \ldots, K\}$} \Comment{Iterating over $K$ layers}
                    \State $s_{ab}^{(k)} \leftarrow f_{RNN}(s_{ab}^{(k)}, \sfmsg_a^{(k)}, \sbmsg_b^{(k)}) \quad \forall a,b$ \Comment{\autoref{eq:message-forward-backward}}
                    \State $\sfmsg_b^{(k+1)} := \sum_{a'} \fmsg(s_{a'b}^{(k)}) \quad \forall b$ \Comment{Create forward message}
                    \State $\sbmsg_a^{(k-1)} := \sum_{b'} \bmsg(s_{ab'}^{(k)}) \quad \forall a$ \Comment{Create backward message}
                \EndFor
                \State $\hat y_a := \sfmsg_{a1}^{(K+1)} \quad \forall a$ \Comment{Read output}
                \State $e := \nabla_{\hat y} L(\hat y, y)$ \Comment{Compute error at outputs using loss $L$}
                \State $\sbmsg_{b1}^{(K)} := e_b \quad \forall b$ \Comment{Input errors}
            \EndFor
            \State $V_M \leftarrow V_M - \alpha \nabla_{V_M} \sum_{t=1}^{T} L(\hat y(t), y(t))$, obtaining $\nabla_{V_M}$ either by
            \begin{itemize}
                \item back-propagation through the inner loop
                \item evolution strategies, using a search distribution $p(V_M)$
            \end{itemize}
        \EndWhile	
    \end{algorithmic}	
    \caption{{\principle}: Meta Training}\label{alg:meta_training}
\end{algorithm}

\subsection{Learning to implement backpropagation in RNNs}\label{sec:backprop}

\begin{wrapfigure}{l}{0.3\textwidth}
    \vspace{-5mm}
    \begin{center}
      \includegraphics[width=3cm]{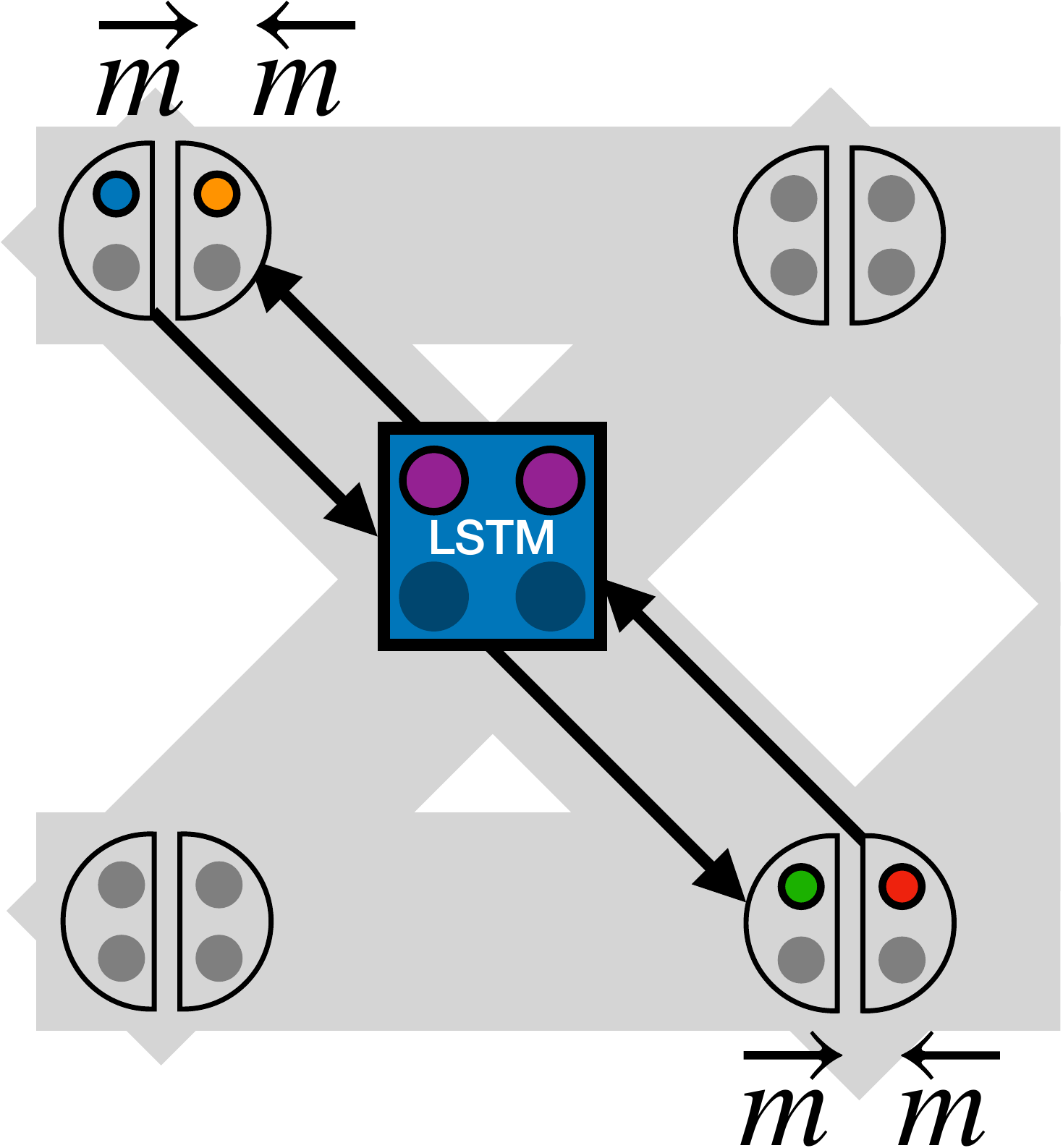}
    \end{center}
    \caption{To implement backpropagation we optimize the {\sapproach} to use and update weights $\color{w} w$ and biases $\color{b} b$ as part of the state $s_{ab}$ in each sub-RNN.
    Inputs are pre-synaptic $\color{x} x$ and error $\color{e} e$.
    Outputs are post-synaptic $\color{y} \hat y$ and error $\color{eo} \hat e'$.
    }
    \label{fig:impl-backprop}
    \vspace{-5mm}
\end{wrapfigure}

An alternative to end-to-end meta learning is to first investigate whether the {\sapproach} can implement backpropagation.
Due to the algorithm's ubiquitous use, it seems desirable to be able to meta learn backpropagation-like algorithms.
Here we investigate how {\approach} can learn to implement backpropagation purely in their recurrent dynamics.
We do this by optimizing $V_M$ to
(1) store a weight $w$ and bias $b$ as a subset of each state $s_{ab}$,
(2) compute $y = \textrm{tanh}(x)w + b$ to implement neural forward computation,
and (3) update $w$ and $b$ according to the backpropagation algorithm~\citep{Linnainmaa:1970}.
We call this process \emph{learning algorithm cloning} and it is visualized in \autoref{fig:impl-backprop}.

We designate an element of each message $\sfmsg_a^{(k)}$,  $\sbmsg_b^{(k)}$, $\fmsg(s_{ab}^{(k)})$, $\bmsg(s_{ab}^{(k)})$ as the input $x$, error $e$, and output $\hat y$ and error $\hat e'$.
Similarly, we set $w := s_{ab1}$ and $b := s_{ab2}$.
We then optimize $V_M$ via gradient descent to regress $\hat y$, $\Delta w$, $\Delta b$, and $\hat e'$ toward their respective targets.
We can either generate the training dataset $D := \{(x, w, b, y, e, e')_i\}$ randomly or run a `shadow' NN on some supervised problem and fit the {\sapproach} to its activations and parameter updates.
Multiple iterations in the {\sapproach} would then correspond to evaluating the network and updating it via backpropagation.
The activations from the forward pass necessary for credit assignment could be memorized as part of the state $s$ or be explicitly stored and fed back.
For simplicity, we chose the latter to clone backpropagation.
We continuously run the {\sapproach} forward, alternately running the layers in order $1, \ldots, K$ and in reverse order $K, \ldots, 1$.%
\footnote{Executing layers in reverse order is not strictly necessary as information always also flows backwards through $\sbmsg$ but makes LA cloning easier.
}

\section{Experiments}

\begin{wrapfigure}{r}{0.5\textwidth}
    \vspace{-10mm}
    \centering
    \includegraphics[width=0.5\textwidth]{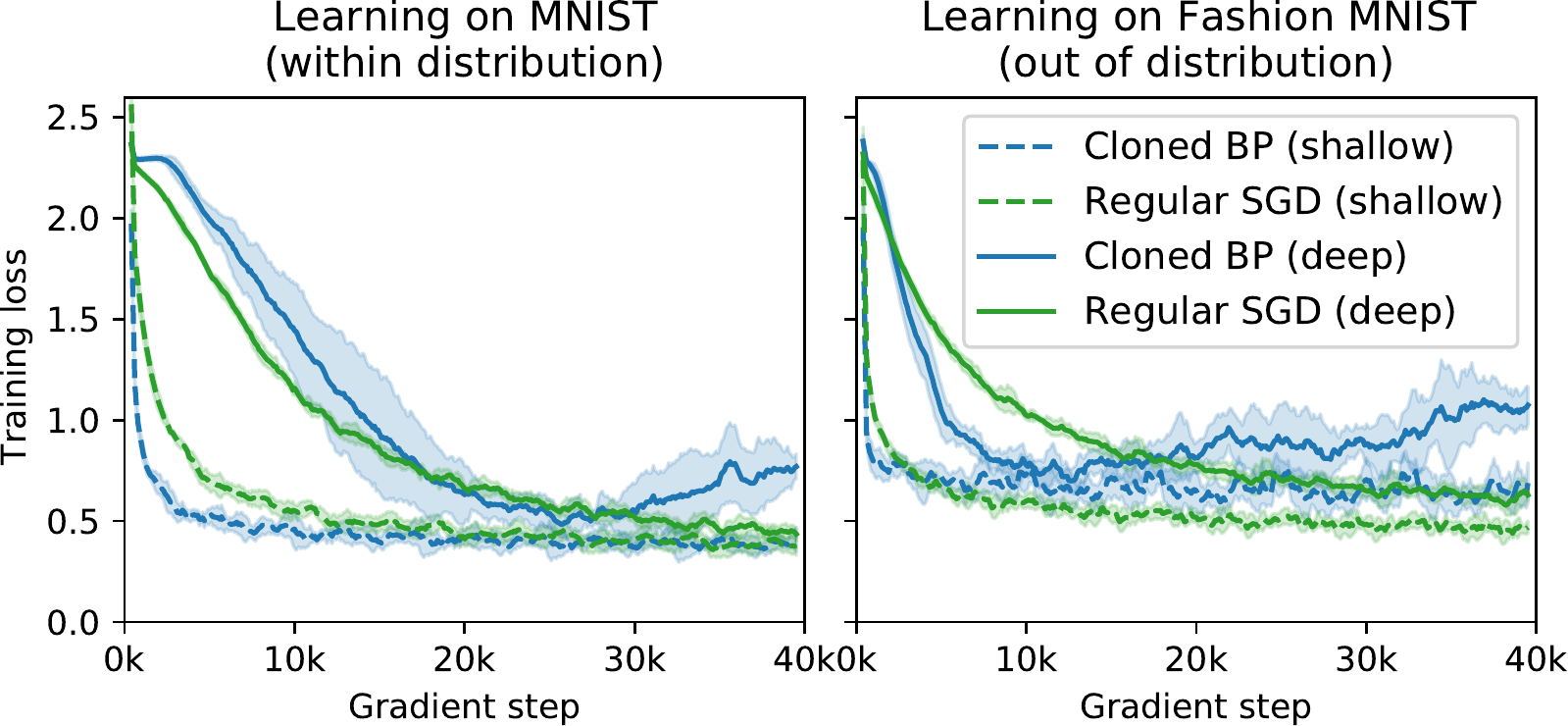}
    \caption{The {\sapproach} is optimized to implement backpropagation in its recurrent dynamics on MNIST, then tested both on MNIST and Fashion MNIST where it performs learning purely by unrolling the LSTM.
    We test on shallow and deep architectures (single hidden layer of 32 units).
    Standard deviations are over 6 seeds.}
    \label{fig:impl_backprop}
    \vspace{-5mm}
\end{wrapfigure}
First, we demonstrate the capabilities of the {\sapproach} by showing that it can implement neural forward computation and backpropagation in its recurrent dynamics on the \emph{MNIST}~\citep{lecun2010mnist} and \emph{Fashion MNIST}~\citep{xiao2017fashionmnist} dataset.
Then, we show how we can meta learn an LA from scratch on one set of datasets and then successfully apply it to another (out of distribution).
Such generalization is enabled by extensive variable sharing where we have very few meta variables $|V_M| \approx 2,400$ and many learned variables $|V_L| \approx 257,200$.
We also investigate the robustness of the discovered LA.
Finally, we introspect the meta learned LA and compare it to gradient descent.

Our implementation uses LSTMs and the message interpretation from \autoref{eq:message-forward-backward}.
Hyperparameters, training details, and additional experiments can be found in the appendix.

\subsection{{\approach} can implement backpropagation}

As described in \autoref{sec:backprop}, we optimize the {\sapproach} to implement backpropagation.
We structure the sub-RNNs to mimic a feed-forward NN with either one hidden layer or no hidden layers.
To obtain training targets, we instantiate a standard NN, the shadow network, and feed it MNIST data.
After cloning, we then run the LA encoded in the {\sapproach} on the MNIST and Fashion MNIST dataset and observe that it performs learning purely in its recurrent dynamics, making explicit gradient calculations unnecessary.
\autoref{fig:impl_backprop} shows the learning curve on these two datasets.
Notably, learning works both on MNIST (within distribution) and on Fashion MNIST (out of distribution).
We observe that the loss is decently minimized, albeit regular gradient descent still outperforms our cloned backpropagation.
This may be due to non-zero errors during learning algorithm cloning, in particular when these errors accumulate in the deeper architecture.
It is also possible that the VSML states (`weights') deviate too far from ranges seen during cloning, in particular in the deep case when the loss starts increasing.
We obtain 87\% (deep) and 90\% (shallow) test accuracy on MNIST and 76\% (deep) and 80\% (shallow) on Fashion MNIST (focusing on successful cloning over performance).

\subsection{Meta learning supervised learning from scratch}

\begin{wrapfigure}{r}{0.5\textwidth}
    \vspace{-5mm}
    \centering
    \includegraphics[width=0.5\columnwidth]{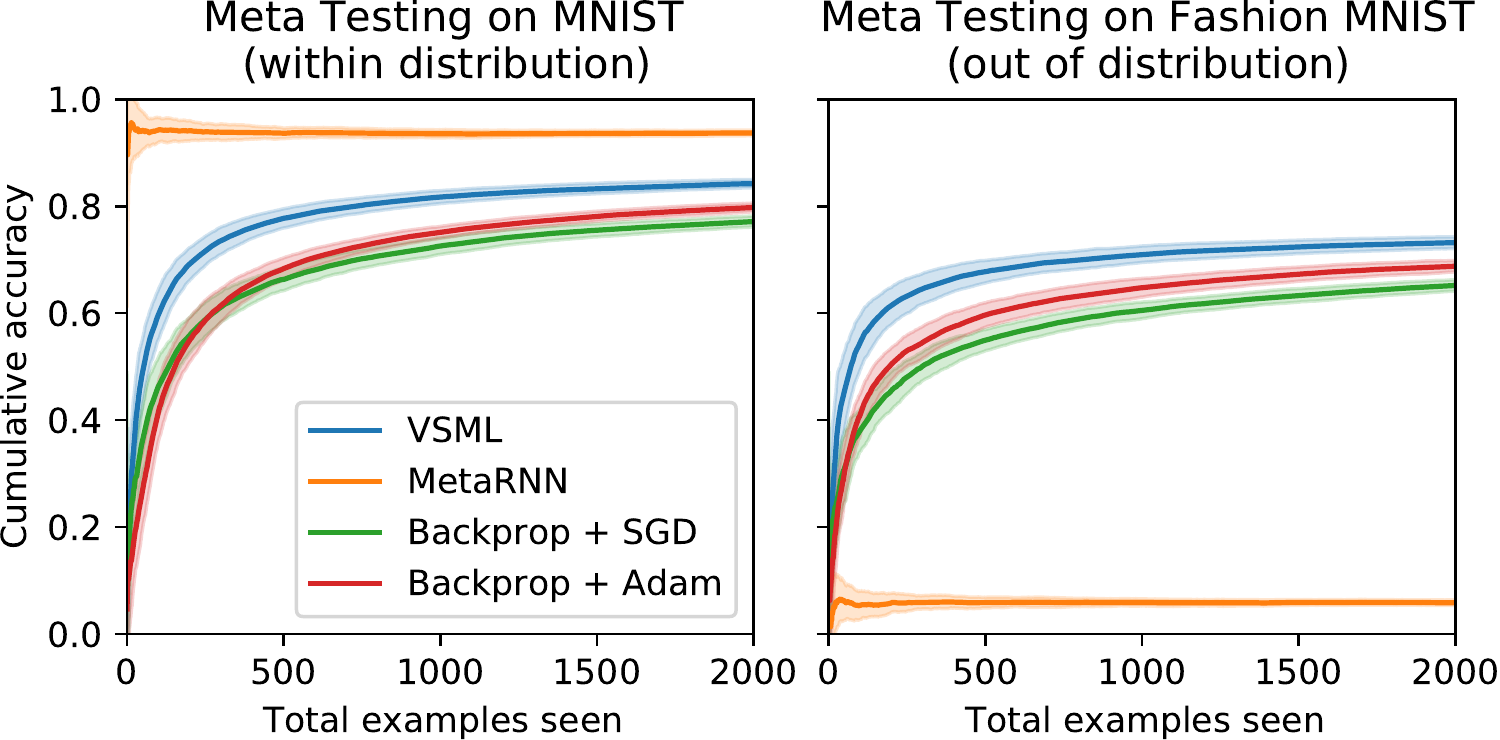}
    \caption{
    The {\sapproach} can be directly meta trained on MNIST to minimize the sum of errors when classifying online starting from a random state initialization.
    This allows for faster learning during meta testing compared to online gradient descent with Adam on the same dataset and even generalizes to a different dataset (Fashion MNIST).
    In comparison, a standard Meta RNN~\citep{Hochreiter2001} strongly overfits in the same setting.
    Standard deviations are over 128 seeds.
    }
    \label{fig:scratch}
    \vspace{-3mm}
\end{wrapfigure}

In the previous experiments, we have established that {\principle} is expressive enough to meta-optimize backpropagation-like algorithms.
Instead of cloning an LA, we now meta learn from scratch as described in \autoref{sec:meta-scratch}.
Here, we use a single layer ($K = 1$) from input to output dimension and run it for two ticks per image with $N = 16$ and $N' = N'' = 8$.
First, the {\sapproachfull} is meta trained end-to-end using evolutionary strategies (ES)~\citep{Salimans2017} on MNIST to minimize the sum of cross-entropies over 500 data points starting from random state initializations.
As each image is unique and $V_M$ can not memorize the data, we are implicitly optimizing the {\sapproach} to generalize to future inputs given all inputs it has seen so far.
We do not pre-train $V_M$ with a human-engineered LA.

During meta testing on MNIST (\autoref{fig:scratch}) we plot the cumulative accuracy on all previous inputs on the y axis ($\frac{1}{T}\sum_{t=1}^{T} c_t$ after example $T$ with binary $c_t$ indicating prediction correctness).
For each example, the prediction when this example was fed to the RNN is used, thus measuring sample efficient learning.
This evaluation protocol is similar to the one used in Meta RNNs~\citep{Wang2016,Duan2016}.
We observe that learning is considerably faster compared to the baseline of online gradient descent (no mini batching, the learning rate appropriately tuned).
One possibility is that {\principle} simply overfits to the training distribution.
We reject this possibility by meta testing the same unmodified RNN on a different dataset, here Fashion MNIST.
Learning still works well, meaning we have meta learned a fairly general LA (although performance at convergence still lacks behind a little).
This generalization is achieved without using any hardcoded gradients during meta testing purely by running the RNN forward.
In comparison to VSML, a {\metarnn} heavily overfits.

\subsection{Robustness to varying inputs and outputs}\label{sec:robustness}

\begin{wrapfigure}{r}{0.6\textwidth}
    \vspace{-8mm}
    \centering
    \includegraphics[width=0.6\columnwidth]{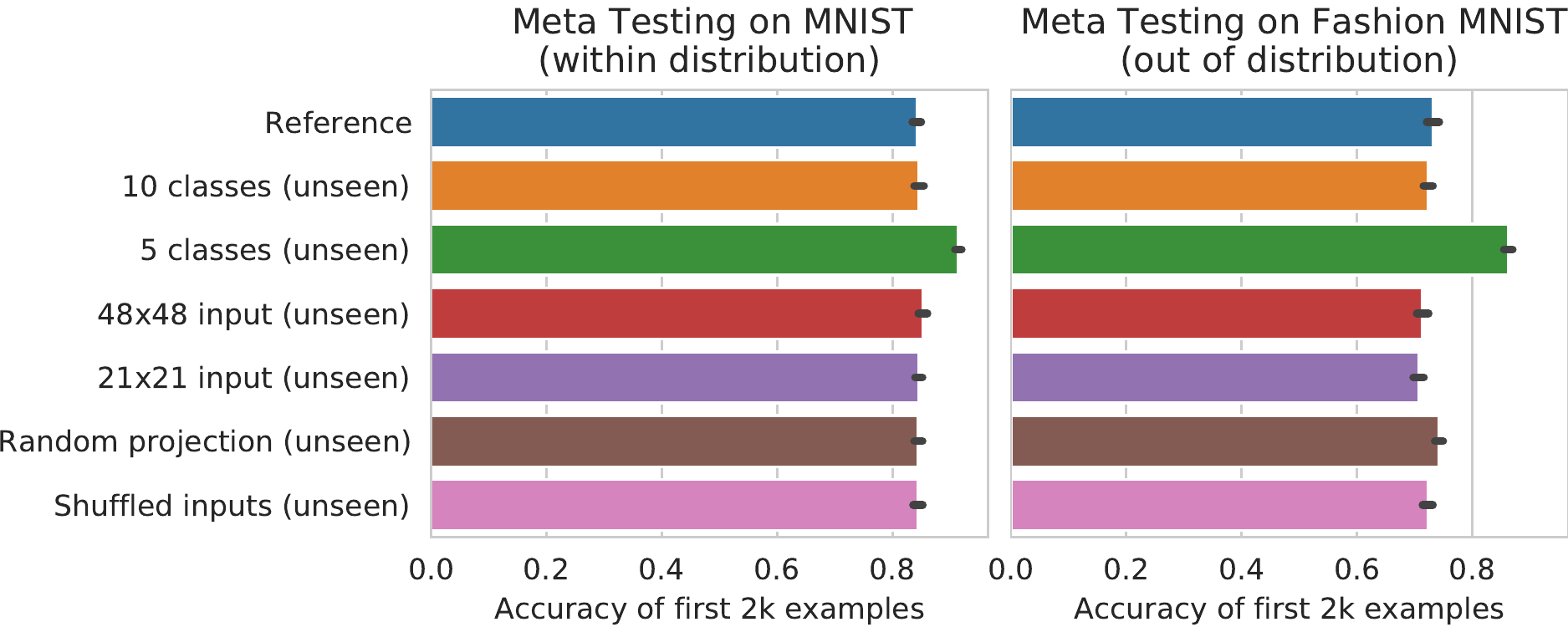}
    \caption{The meta learned learning algorithm is robust to permutations and size changes in the inputs and outputs.
    All six configurations have not been seen during training and perform comparable to the unchanged reference.
    Standard deviations are over 32 seeds.
    }
    \label{fig:invariances}
    \vspace{-3mm}
\end{wrapfigure}

A defining property of {\principle} is that the same parameters $V_M$ can be used to learn on varying input and output sizes.
Further, the architecture and thus the meta learned LA is invariant to the order of inputs and outputs.
In this experiment, we investigate how robust we are to such changes.
We meta train across MNIST with 3, 4, 6, and 7 classes.
Likewise, we train across rescaled versions with 14x14, 28x28, and 32x32 pixels.
We also randomly project all inputs using a linear transformation, with the transformation fixed for all inner learning steps.
In \autoref{fig:invariances} we meta test on 6 configurations that were not seen during meta training.
Performance on all of these configurations is comparable to the unchanged reference from the previous section.
In particular, the invariance to random projections suggests that we have meta learned a learning algorithm beyond transferring learned representations~\citep[compare][]{Finn2017,Triantafillou2019,tseng2020cross}.

\subsection{Varying datasets}

\begin{figure}
    \centering
    \includegraphics[width=1.0\textwidth]{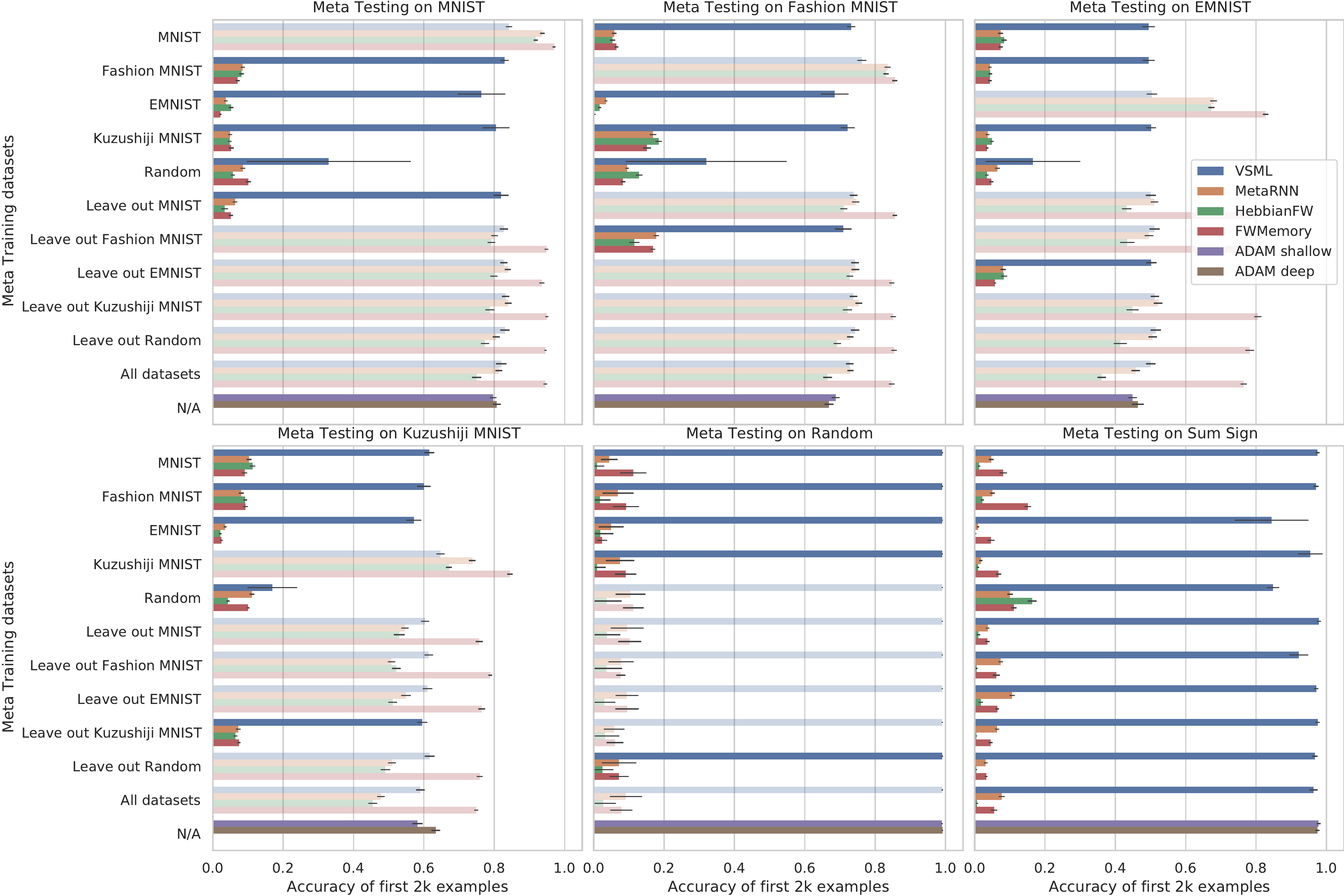}
    \caption{
    Online learning on various datasets.
    Cumulative accuracy in \% after having seen 2k training examples evaluated after each prediction starting with random states ({\principle}, {\metarnn}, HebbianFW, FWMemory) or random parameters (SGD).
    Standard deviations are over 32 meta test training runs.
    Meta testing is done on the official test set of each dataset.
    Meta training is on subsets of datasets excluding the Sum Sign dataset.
    Unseen tasks, most relevant from a general-purpose LA perspective, are opaque.
    }
    \label{fig:all-results}
    \vspace{-5mm}
\end{figure}

To better understand how different meta training distributions and meta test datasets affect {\approach} and our baselines, we present several different combinations in \autoref{fig:all-results}.
The opaque bars represent tasks that were not seen during meta training and are thus most relevant for this analysis.
This includes four additional datasets:
(1) \emph{Kuzushiji MNIST}~\citep{clanuwat2018deep} with 10 classes,
(2) \emph{EMNIST}~\citep{cohen2017emnist} with 62 classes,
(3) A randomly generated classification dataset (\emph{Random}) with $20$ data points that changes with each step in the outer loop,
and (4) \emph{Sum Sign} which generates random inputs and requires classifying the sign of the sum of all inputs.
Meta training is done over 500 randomly drawn samples per outer iteration.
Each algorithm is meta trained for 10k outer iterations.
Inputs are randomly projected as in \autoref{sec:robustness} (for VSML; the baselines did not benefit from these augmentations).
We again report the cumulative accuracy on all data seen since the beginning of meta test training.
We compare to SGD with a single layer, matching the architecture of {\principle}, and a hidden layer, matching the number of weights to the size of $V_L$ in {\principle}.
We also have included a Hebbian fast weight baseline~\citep{Miconi2018DifferentiableBackpropagation} and an external (fast weight) memory approach~\citep{schlag2021learning}.

We observe that {\principle} generalizes much better than {\metarnns}, Hebbian fast weights, and the external memory.
These baselines overfit to the training environments.
Notably, VSML even generalizes to the unseen tasks \emph{Random} and \emph{Sum Sign} which have no shared structure with the other datasets.
In many cases {\principle}'s performance is similar to SGD but a little more sample efficient in the beginning of training (learning curves in \autoref{sec:additional-experiments}).
This suggests that our meta learned LAs are good at quickly associating new inputs with their labels.
We further investigate this in the next \autoref{sec:introspection}.

\section{Analysis}\label{sec:introspection}

\begin{wrapfigure}{r}{0.7\textwidth}
    \vspace{-6mm}
    \centering
    \includegraphics[width=0.7\columnwidth]{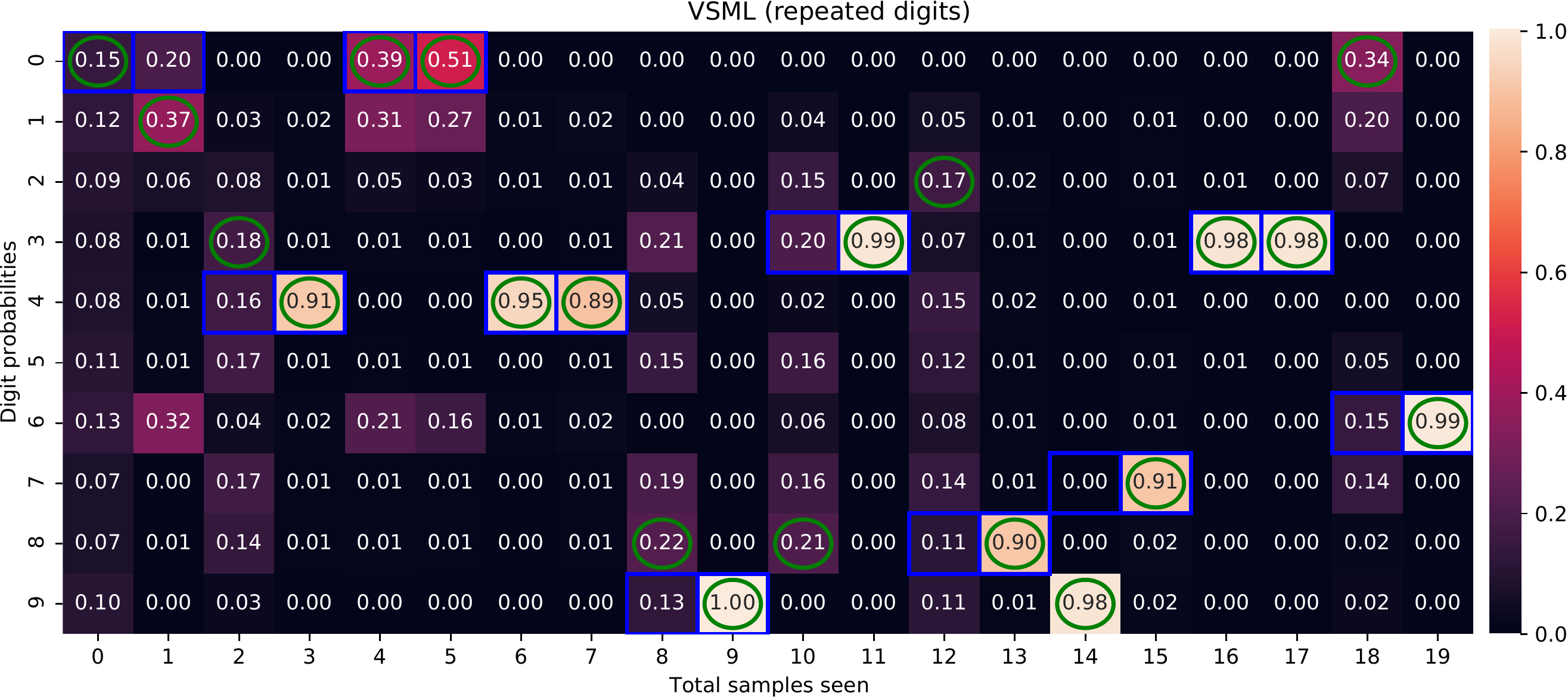}
    \includegraphics[width=0.7\columnwidth]{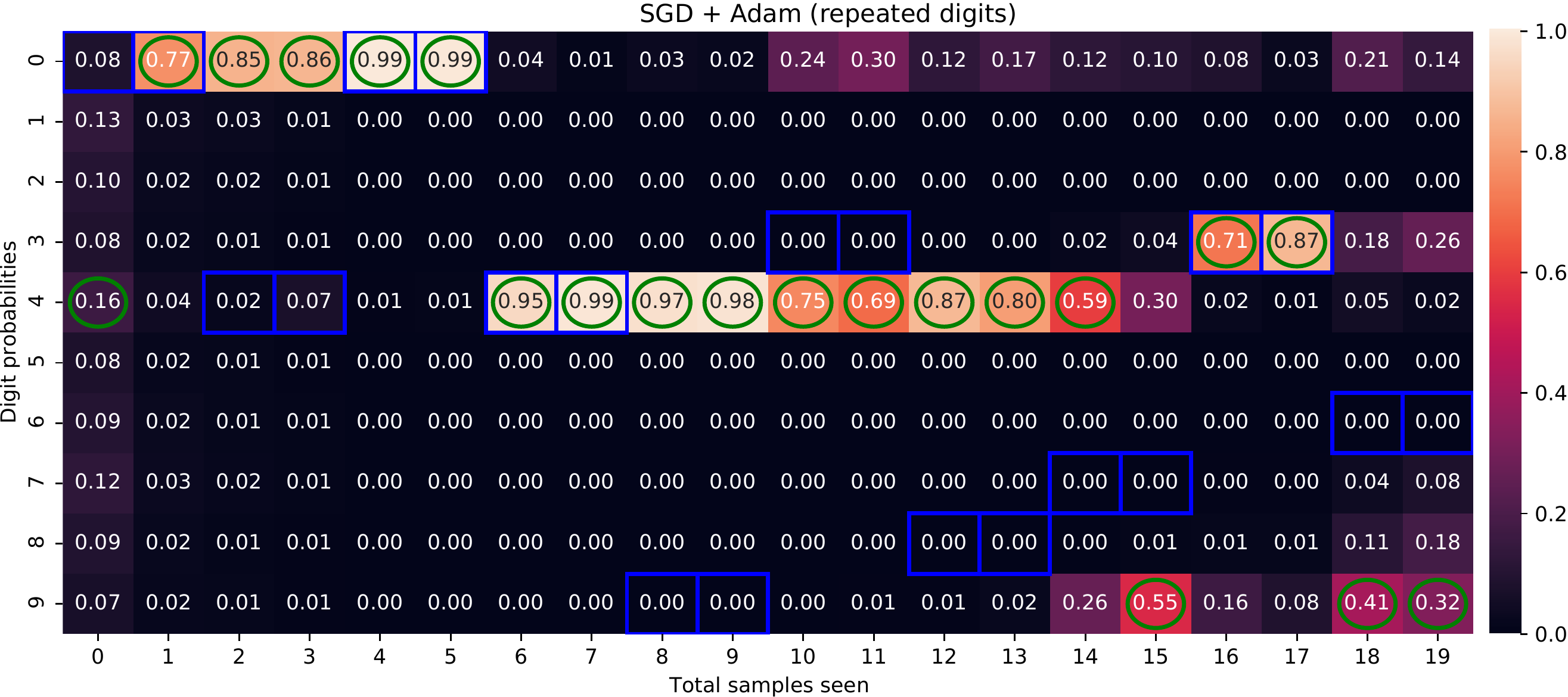}
    \caption{
        Introspection of how output probabilities change after observing an input and the prediction error when meta testing on MNIST with two examples for each type.
        We highlight the ground truth class $\color{label} \square$ as well as the predicted class $\color{prediction} \bigcirc$.
        The top plot shows {\principle} quickly associating the input images with the right label, almost always making the right prediction the second time with high confidence.
        The bottom plot shows the same dataset processed by SGD with Adam which fails to learn quickly.
    }
    \label{fig:action-introspection}
    \vspace{-5mm}
\end{wrapfigure}

Given that {\principle} seems to learn faster than online gradient descent in many cases we would like to qualitatively investigate how learning differs.
We first meta train on the full MNIST dataset as before.
During meta testing, we plot the output probabilities for each digit against the number of samples seen in \autoref{fig:action-introspection}.
We highlight the ground truth input class $\color{label} \square$ as well as the predicted class $\color{prediction} \bigcirc$.
In this case, our meta test dataset consists of MNIST digits with two examples of each type.
The same digit is always repeated twice.
This allows us to observe and visualize the effect with only a few examples.
We have done the same introspection with the full dataset in \autoref{sec:additional-experiments}.

We observe that in {\principle} almost all failed predictions are followed by the correct prediction with high certainty.
In contrast, SGD makes many incorrect predictions and fails to adapt correctly in only 20 steps.
It seems that SGD learns qualitatively different from {\principle}.
The {\sapproach} meta learns to quickly associate new inputs with their class whereas SGD fails to do so.
We tried several different SGD learning rates and considered multiple steps on the same input.
In both cases, SGD does not behave similar to VSML, either learning much slower or forgetting previous examples.
As evident from high accuracies in \autoref{fig:all-results}, {\principle} does not only memorize inputs using this strategy of fast association, but the associations generalize to future unseen inputs.

\section{Related Work}

\textbf{Memory based meta learning (Meta RNNs)}
Memory-based meta learning in RNNs~\citep{Hochreiter2001,Duan2016,Wang2016} is a simple neural meta learner (see \autoref{sec:background}).
Unfortunately, the LA encoded in the RNN parameters is largely overparameterized which leads to overfitting.
{\principle} demonstrates that weight sharing can address this issue resulting in more general-purpose LAs.

\textbf{Learned Learning Rules / Fast Weights}
NNs that generate or change the weights of another or the same NN are known as fast weight programmers~\citep{Schmidhuber:91fastweights}, hypernetworks~\citep{Ha2016}, NNs with synaptic plasticity~\citep{Miconi2018DifferentiableBackpropagation} or learned learning rules~\citep{Bengio1992OnRule} (see \autoref{sec:background}).
In {\principle} we do not require explicit architectures for weight updates as weights are emergent from RNN state updates.
In addition to the learning rule, we implicitly learn how the neural forward computation is defined.
Concurrent to this work, fast weights have also been used to meta learn more general LAs~\citep{sandler2021meta}.

\textbf{Learned gradient-based optimizers}
Meta learning has been used to find optimizers that update the parameters of a model by taking the loss and gradient with respect to these parameters as an input~\citep{Ravia,Andrychowicz2016,Li2016b,Metz2020}.
In this work, we are interested in meta learning that does not rely on fixed gradient calculation in the inner loop.

\textbf{Discrete program search}
An interesting alternative to distributed variable updates in {\principle} is meta learning via discrete program search~\citep{Schmidhuber:94kol,Real2020}.
In this paradigm, a separate programming language needs to be defined that gives rise to neural computation when its instructions are combined.
This led to the automated rediscovery of backpropagation~\citep{Real2020}.
In VSML we demonstrate that a symbolic programming language is not required and general-purpose LAs can be discovered and encoded in variable-shared RNNs.
Search over neural network parameters is usually easier compared to symbolic program search due to smoothness in the loss landscape.

\textbf{Multi-agent systems}
In the reinforcement learning setting multiple agents can be modeled with shared parameters~\citep{SimsKarl1994,Pathak2019,huang2020smp}, also in the context of meta learning~\citep{Rosa2019}.
This is related to the variable sharing in {\principle} depending on how the agent-environment boundary is drawn.
Unlike these works, we demonstrate the advantage of variable sharing in meta learning more general-purpose LAs and present a weight update interpretation.

\section{Discussion and Limitations}

The research community has perfected the art of leveraging backpropagation for learning for a long time.
At the same time, there are open questions such as how to minimize memory requirements, effectively learn online and continually, learn sample efficiently, learn without separate backward phases, and others.
{\principle} suggests that instead of building on top of backpropagation as a fixed routine, meta learning offers an alternative to discover general-purpose LAs.
Nevertheless, this paper is only a proof of concept---until now we have only investigated small-scale problems and performance does not yet quite match the mini-batched setting with large quantities of data.
In particular, we observed premature convergence of the solution at meta test time which calls for further investigations.
Scaling our system to harder problems and larger meta task distributions will be important future work.

The computational cost of the current {\principle} variant is also larger than the one of standard backpropagation.
If we run a sub-RNN for each weight in a standard NN with $W$ weights, the cost is in $O(WN^2)$, where $N$ is the state size of a sub-RNN.
If $N$ is small enough, and our experiments suggest small $N$ may be feasible, this may be an acceptable cost.
However, {\principle} is not bound to the interpretation of a sub-RNN as one weight.
Future work may relax this particular choice.

Meta optimization is also prone to local minima.
In particular, when the number of ticks between input and feedback increases (e.g. deeper architectures), credit assignment becomes harder.
Early experiments suggest that diverse meta task distributions can help mitigate these issues.
Additionally, learning horizons are limited when using backprop-based meta optimization.
Using ES allowed for training across longer horizons and more stable optimization.

{\principle} can also be viewed as regularizing the NN weights that encode the LA through a representational bottleneck.
It is conceivable that LA generalization as obtained by {\principle} can also be achieved through other regularization techniques.
Unlike most regularizers, {\principle} also introduces substantial reuse of the same learning principle and permutation invariance through variable sharing.

\section{Conclusion}

We introduced {\principlefull} (VSML), a simple principle of weight sharing and sparsity for meta learning powerful learning algorithms (LAs).
Our implementation replaces the weights of a neural network with tiny LSTMs that share parameters.
We discuss connections to meta recurrent neural networks, fast weight generators (hyper networks), and learned learning rules.

Using \emph{learning algorithm cloning}, {\approachfull} can learn to implement the backpropagation algorithm and its parameter updates encoded implicitly in the recurrent dynamics.
On MNIST it learns to predict well without any human-designed explicit computational graph for gradient calculation.

{\principle} can meta learn from scratch supervised LAs that do not explicitly rely on gradient computation and that \emph{generalize to unseen datasets}.
Introspection reveals that {\principle} LAs learn by fast association in a way that is qualitatively different from stochastic gradient descent.
This leads to gains in sample efficiency.
Future work will focus on reinforcement learning settings, improvements of meta learning, larger task distributions, and learning over longer horizons.

\newpage

\section*{Acknowledgements}
We thank Sjoerd van Steenkiste, Imanol Schlag, Kazuki Irie, and the anonymous reviewers for their comments and feedback.
This work was supported by the ERC Advanced Grant (no: 742870) and computational resources by the Swiss National Supercomputing Centre (CSCS, projects s978 and s1041).
We also thank NVIDIA Corporation for donating several DGX machines as part of the Pioneers of AI Research Award, IBM for donating a Minsky machine, and weights \& biases~\citep{biewald2020} for their great experiment tracking software and support.

\bibliography{refs}

\newpage

\newpage
\appendix

\section{Derivations}\label{sec:rnn-equiv-derivation}

\begin{theorem}
The weight matrices $W$ and $C$ used to compute {\approach} from \autoref{eq:interact} can be expressed as a standard RNN with weight matrix $\tilde W$ (\autoref{eq:sparse-shared-equiv}) such that
\begin{align}\label{eq:rnn-equiv}
    s_{abj} &\leftarrow \sigma(\sum_i s_{abi} W_{ij} + \sum_{c,i} s_{cai} C_{ij}) \\
    &= \sigma(\sum_{c,d,i} s_{cdi} \tilde W_{cdiabj}).
\end{align}
The weight matrix $\tilde W$ has entries of zero and shared entries given by \autoref{eq:sparse-shared}.
\begin{equation*}
    \tilde W_{cdiabj} = \begin{cases}
        C_{ij}, & \text{if } d = a \land ( d \neq b \lor c \neq a ). \\
        W_{ij}, & \text{if } d \neq a \land d = b \land c = a. \\
        C_{ij} + W_{ij}, & \text{if } d = a \land d = b \land c = a. \\
        0, & \text{otherwise.}
    \end{cases}
    \tag{\ref{eq:sparse-shared} revisited}
\end{equation*}
\end{theorem}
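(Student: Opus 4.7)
The strategy is a direct unpacking of the definition of $\tilde W$ and a case analysis on the index tuple $(c,d)$. I would start by substituting the piecewise definition (\ref{eq:sparse-shared}) into the right-hand side $\sum_{c,d,i} s_{cdi}\,\tilde W_{cdiabj}$ and splitting the sum into four disjoint pieces corresponding to the four branches in the definition. Since the "otherwise" branch contributes zero, only three index regions matter.

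The core observation is that the two summands on the left-hand side of (\ref{eq:rnn-equiv}) are supported on two different "slices" of the six-index tensor: the term $\sum_i s_{abi} W_{ij}$ lives on the singleton slice $(c,d)=(a,b)$, while the term $\sum_{c,i} s_{cai} C_{ij}$ lives on the slice $d=a$ (all $c$). The three non-trivial branches of $\tilde W$ exactly partition the union of these two slices: branch two ($d\neq a,\ d=b,\ c=a$) covers the $W$-slice minus the overlap, branch one ($d=a$ together with $d\neq b\lor c\neq a$) covers the $C$-slice minus the overlap, and branch three ($d=a=b,\ c=a$) is precisely the overlap and therefore receives the sum $W_{ij}+C_{ij}$. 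So the plan is to read off each of these three contributions to $\sum_{c,d,i} s_{cdi}\,\tilde W_{cdiabj}$, collapse the Kronecker constraints on $c$ and $d$, and verify that the pieces reassemble into $\sum_i s_{abi} W_{ij} + \sum_{c,i} s_{cai} C_{ij}$.

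To make the bookkeeping transparent I would split into the two sub-cases $a\neq b$ and $a=b$. When $a\neq b$, branch three is vacuous; branch two collapses to $\sum_i s_{abi}W_{ij}$ and branch one contributes $\sum_{c,i} s_{cai}C_{ij}$ (the side condition $d\neq b\lor c\neq a$ is automatic because $d=a\neq b$), giving the claim. When $a=b$, branch two is vacuous; branch three contributes the diagonal $\sum_i s_{aai}(W_{ij}+C_{ij})$ and branch one contributes $\sum_{c\neq a,i} s_{cai}C_{ij}$, and combining these restores $\sum_i s_{abi}W_{ij}+\sum_{c,i}s_{cai}C_{ij}$. Applying $\sigma$ to both sides then yields (\ref{eq:rnn-equiv}).

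\textbf{Expected difficulty.}
There is no analytic obstacle: the proof is essentially a bookkeeping verification that the three non-zero branches of $\tilde W$ partition the support of the two original summands and double-count exactly on the diagonal $a=b$, $c=d=a$. The only place where one could slip is the overlap case, where it is tempting to include $W_{ij}$ in branch two rather than in branch three and lose the $C_{ij}$ contribution on the diagonal; splitting the argument explicitly into the sub-cases $a\neq b$ and $a=b$ prevents this mistake. Everything else is a collapse of Kronecker-style constraints under the sum.
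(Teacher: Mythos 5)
Your proposal is correct and matches the paper's argument in substance: both proofs decompose $\tilde W$ into a $W$-part supported on the slice $(c,d)=(a,b)$ and a $C$-part supported on the slice $d=a$, with the overlap of the two slices accounting for the $C_{ij}+W_{ij}$ branch. The only cosmetic difference is direction --- the paper writes the compact form $\tilde W_{cdiabj}=(d\equiv a)C_{ij}+(d\equiv b)(c\equiv a)W_{ij}$, checks each indicator term reproduces one of the two sums, and then expands into the case table, whereas you start from the table and verify it collapses back to the left-hand side via the $a=b$ versus $a\neq b$ split.
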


\begin{proof}
We rearrange $\tilde W$ into two separate weight matrices
\begin{align}
    & \sum_{c,d,i} s_{cdi} \tilde W_{cdiabj} \\
    = & \sum_{c,d,i} s_{cdi} A_{cdiabj} + \sum_{c,d,i} s_{cdi} (\tilde W - A)_{cdiabj}. \label{eq:separated_tildeW}
\end{align}
Then assuming $A_{cdiabj} = (d \equiv b)(c \equiv a) W_{ij}$, where $x \equiv y$ equals $1$ iff $x$ and $y$ are equal and $0$ otherwise, it holds that
\begin{equation}
    \sum_{c,d,i} s_{cdi} A_{cdiabj} = \sum_i s_{abi} W_{ij}.
\end{equation}
Further, assuming $(\tilde W - A)_{cdiabj} = (d \equiv a) C_{ij}$ we obtain
\begin{equation}
    \sum_{c,d,i} s_{cdi} (\tilde W - A)_{cdiabj} = \sum_{c,i} s_{cai} C_{ij}.
\end{equation}
Finally, solving both conditions for $\tilde W$ gives
\begin{equation}
    \tilde W_{cdiabj} = (d \equiv a) C_{ij} + (d \equiv b)(c \equiv a)W_{ij},
\end{equation}
which we rewrite in tabular notation:
\begin{equation}
    \tilde W_{cdiabj} = \begin{cases}
        C_{ij}, & \text{if } d = a \land ( d \neq b \lor c \neq a ). \\
        W_{ij}, & \text{if } d \neq a \land d = b \land c = a. \\
        C_{ij} + W_{ij}, & \text{if } d = a \land d = b \land c = a. \\
        0, & \text{otherwise.}
    \end{cases}
\end{equation}
Thus, \autoref{eq:rnn-equiv} holds and any weight matrices $W$ and $C$ can be expressed by a single weight matrix $\tilde W$.
\end{proof}

\newpage
\section{Additional Experiments}\label{sec:additional-experiments}

\subsection{Learning algorithm cloning}

\begin{wrapfigure}{r}{0.5\textwidth}
    \centering
    \includegraphics[width=0.5\columnwidth]{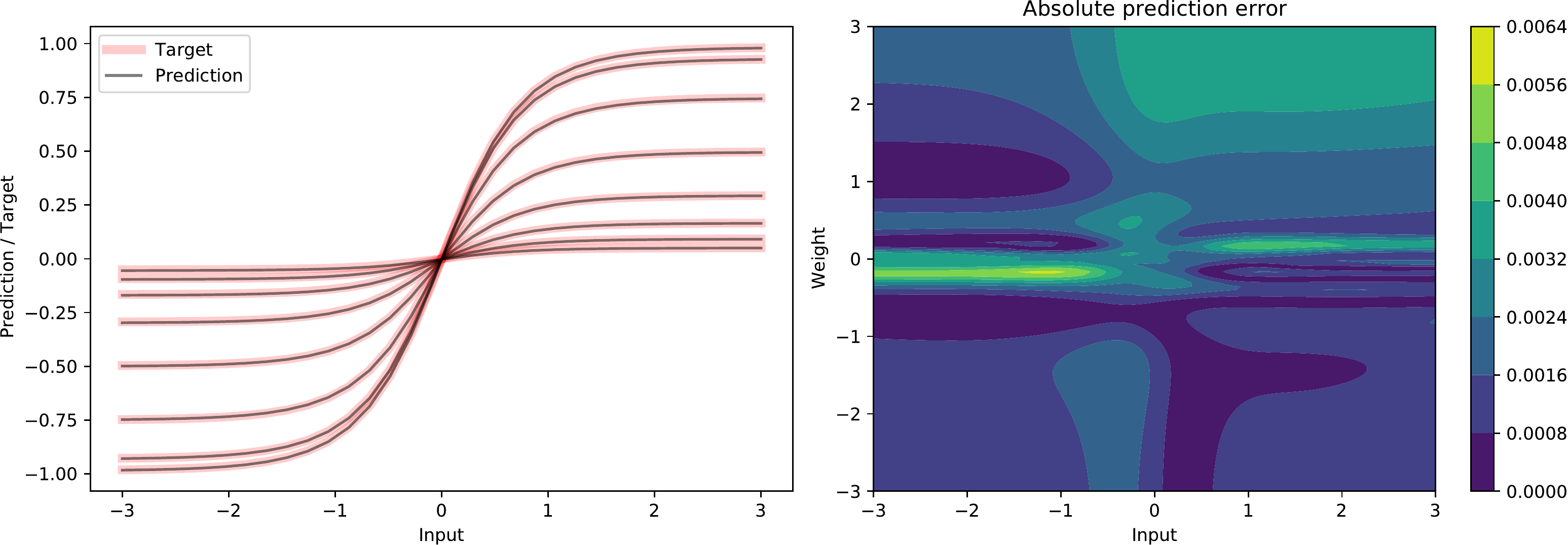}
    \caption{We are optimizing {\approach} to implement neural forward computation such that for different inputs and weights a tanh-activated multiplicative interaction is produced (left), with different lines for different $w$.
    These neural dynamics are not exactly matched everywhere (right), but the error is relatively small.
    }
    \label{fig:neural_tanh}
    \vspace{-5mm}
\end{wrapfigure}

\paragraph{{\approach} can implement neural forward computation}\label{sec:lstm_neural}
In this experiment, we optimize the {\sapproach} to compute $y = tanh(x)w$.
\autoref{fig:neural_tanh} (left) shows how for different inputs $x$ and weights $w$ the LSTM produces the correct target value, including the multiplicative interaction.
The heat-map (right) shows that low prediction errors are produced but the target dynamics are not perfectly matched.
We repeat these LSTMs in line with \autoref{eq:message-forward-backward} to obtain an `emergent' neural network.

\paragraph{Learning Algorithm Cloning Curriculum}
In principle, backpropagation can be simply cloned on random data such that forward computation implements multiplicative activation-weight interaction and backward computation passes an error signal back given previous forward activations.
If the previous forward activations are fed as an input one could stack {\approach} that implement these two operations to mimic arbitrarily deep NNs.
By purely training on random data and unrolling for one step, we can successfully learn on MNIST and Fashion MNIST in the shallow setting.
For deeper models, in practice, cloning errors accumulate and input and state distributions shift.
To achieve learning in deeper networks we have used a curriculum on random and MNIST data.
We first match the forward activations, backward errors, and weight updates for a shallow network.
Next, we use a deep network and provide intermediate errors by a ground truth network.
Finally, we remove intermediate errors and use the RNN's intermediate predictions that are now close to the ground truth.
The final {\sapproach} can be used to train both shallow (\autoref{fig:impl_backprop_shallow}) and deep configurations (\autoref{fig:impl_backprop_deep}).

\begin{figure}[t]
    \centering
    \hfill
    \begin{subfigure}[b]{0.49\textwidth}
    \includegraphics[width=\textwidth]{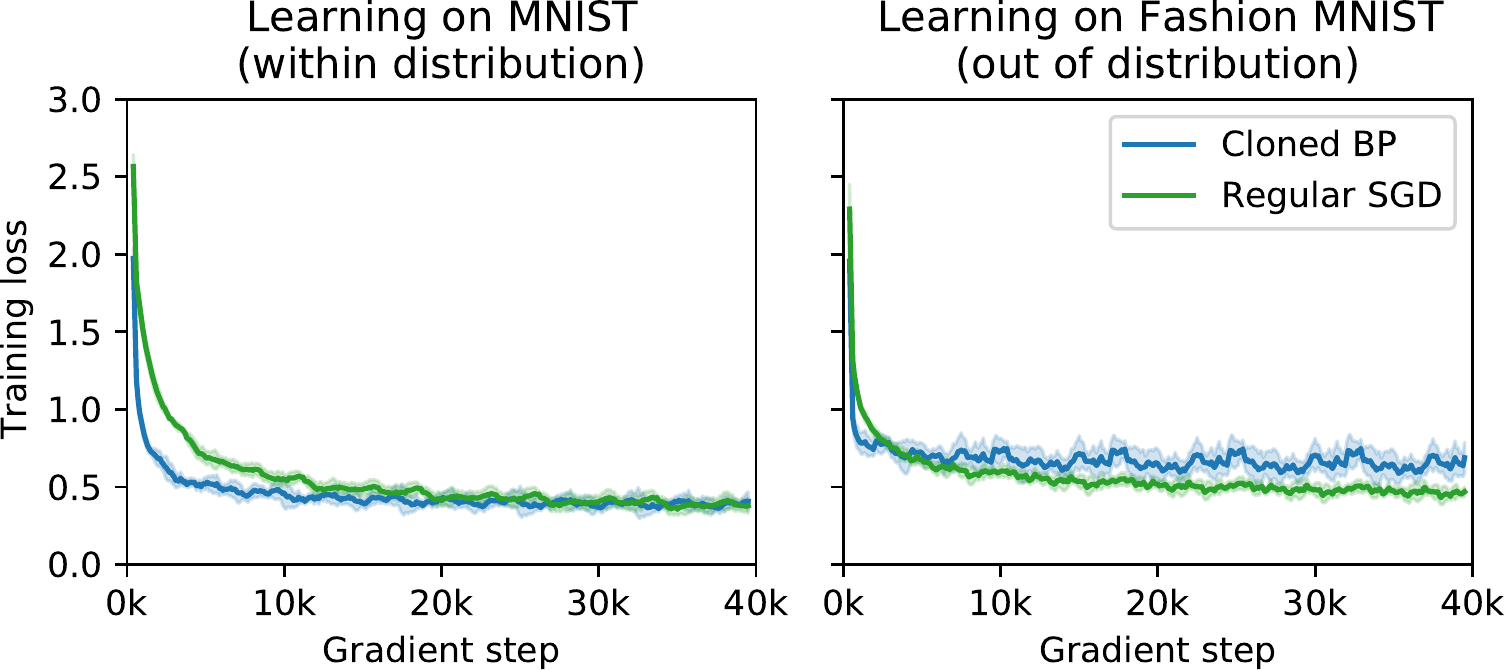}
    \includegraphics[width=\textwidth]{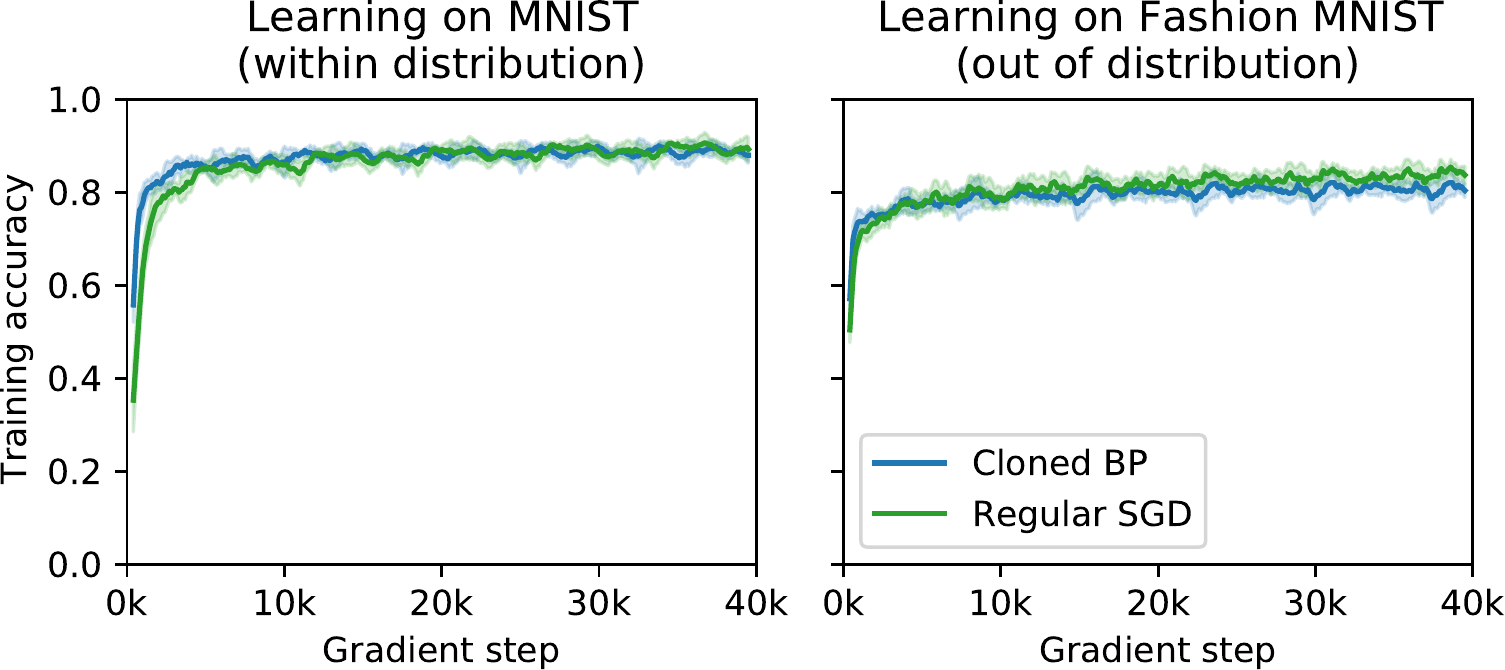}
    \caption{Shallow network arrangement.}
    \label{fig:impl_backprop_shallow}
    \end{subfigure}
    \hfill
    \begin{subfigure}[b]{0.49\textwidth}
    \includegraphics[width=\textwidth]{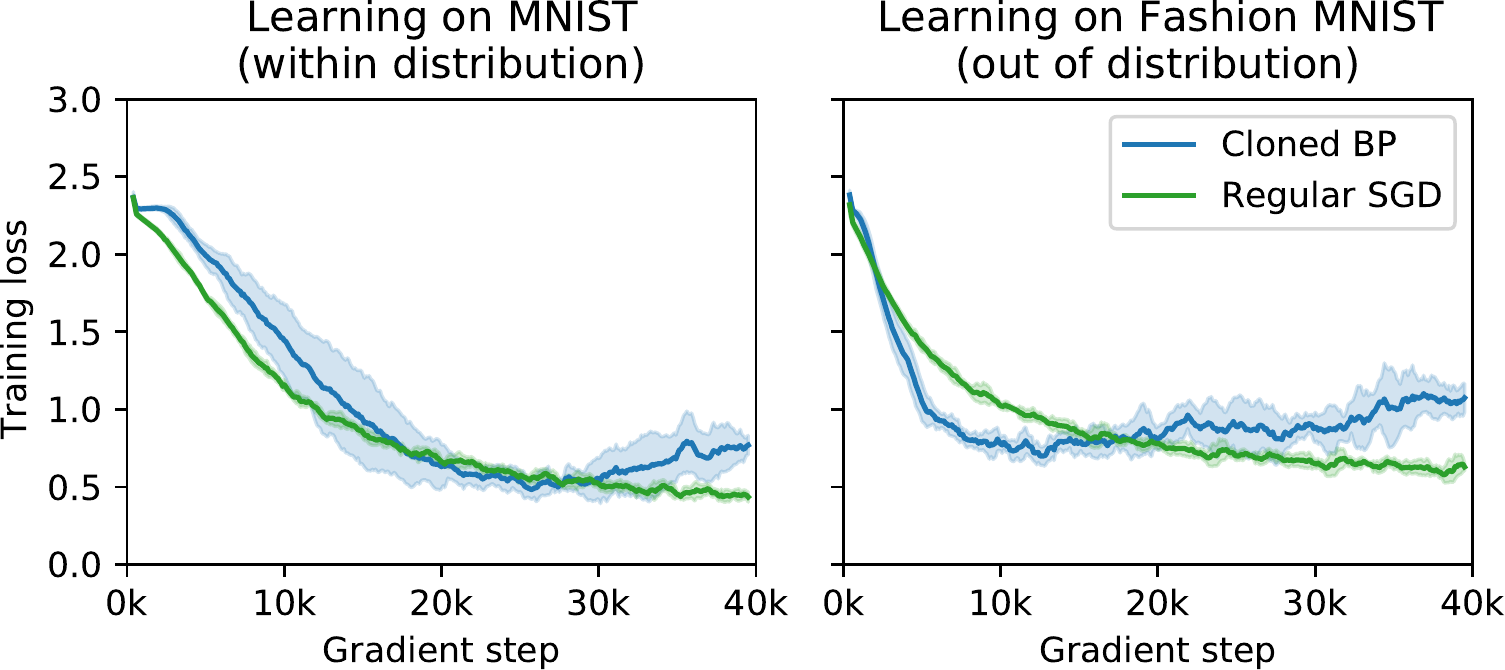}
    \includegraphics[width=\textwidth]{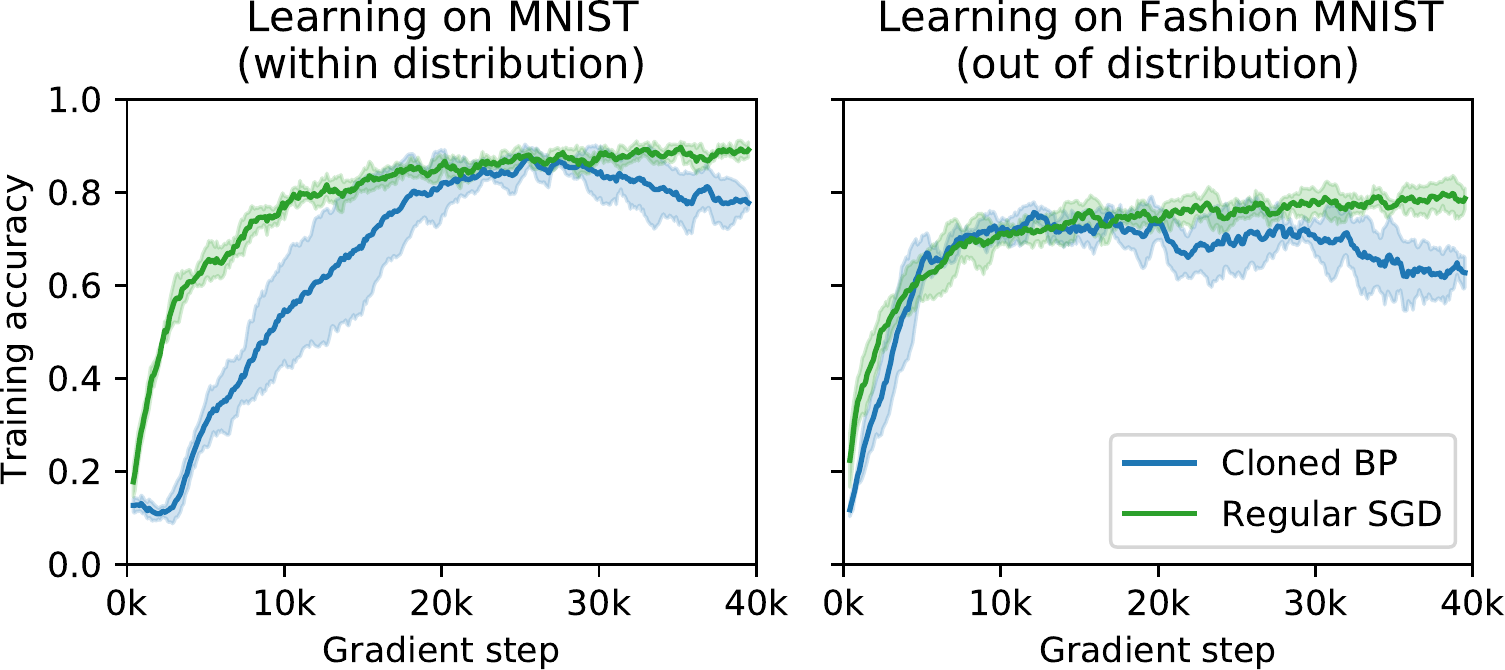}
    \caption{Deep (1 hidden layer) network arrangement.}
    \label{fig:impl_backprop_deep}
    \end{subfigure}
    \caption{Additional experiments with {\approach} implementing backpropagation.
    Standard deviations are over 6 seeds.}
\end{figure}

\subsection{Meta learning from scratch}

\paragraph{Meta testing learning curves \& sample efficiency}
In \autoref{fig:all-results} we only showed accuracies after 2k steps.
\autoref{fig:learning-curves} provides the entire meta test training trajectories for a subset of all configurations.
Furthermore, in \autoref{fig:sample-efficiency} we show the cumulative accuracy on the first 100 examples.
From both figures, it is evident that learning at the beginning is accelerated compared to SGD with Adam.
Also compare with our introspection from \autoref{sec:introspection}.

\begin{figure}
    \centering
    \includegraphics[width=\textwidth]{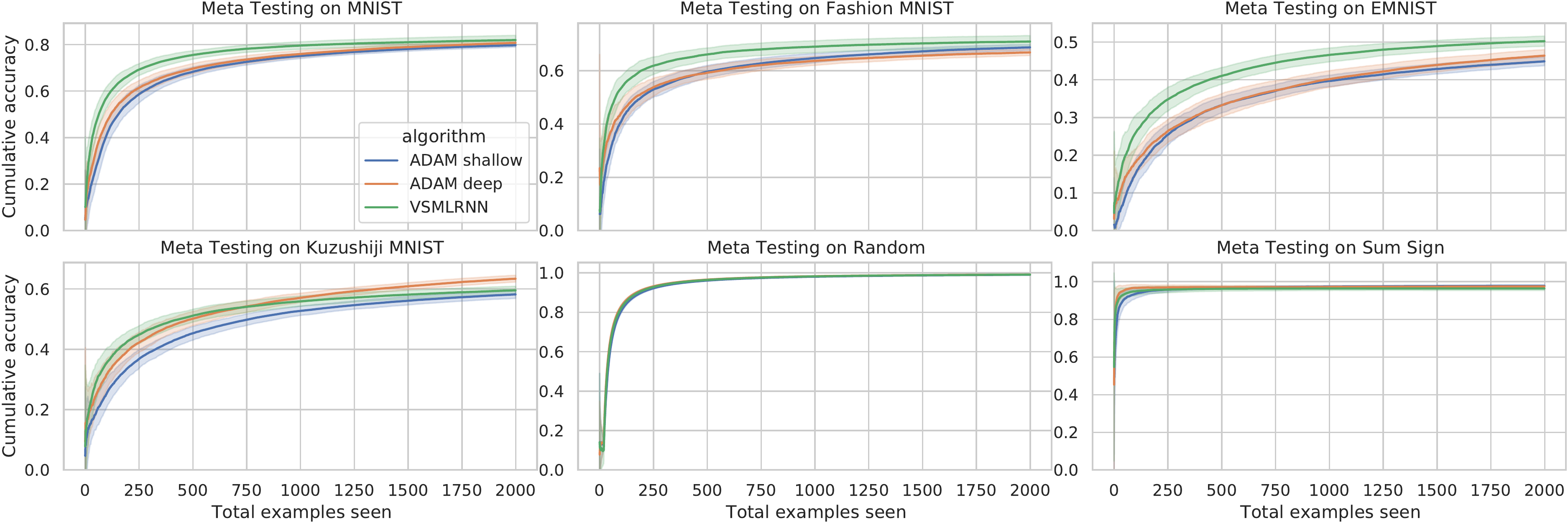}
    \caption{Meta testing learning curves. All 6 meta test tasks are unseen. {\sapproach} has been meta trained on MNIST, Fashion MNIST, EMNIST, KMNIST, and Random, excluding the respective dataset that is being meta tested on. Standard deviations are over 32 seeds.
    }
    \label{fig:learning-curves}
\end{figure}

\begin{figure}
    \centering
    \includegraphics[width=\textwidth]{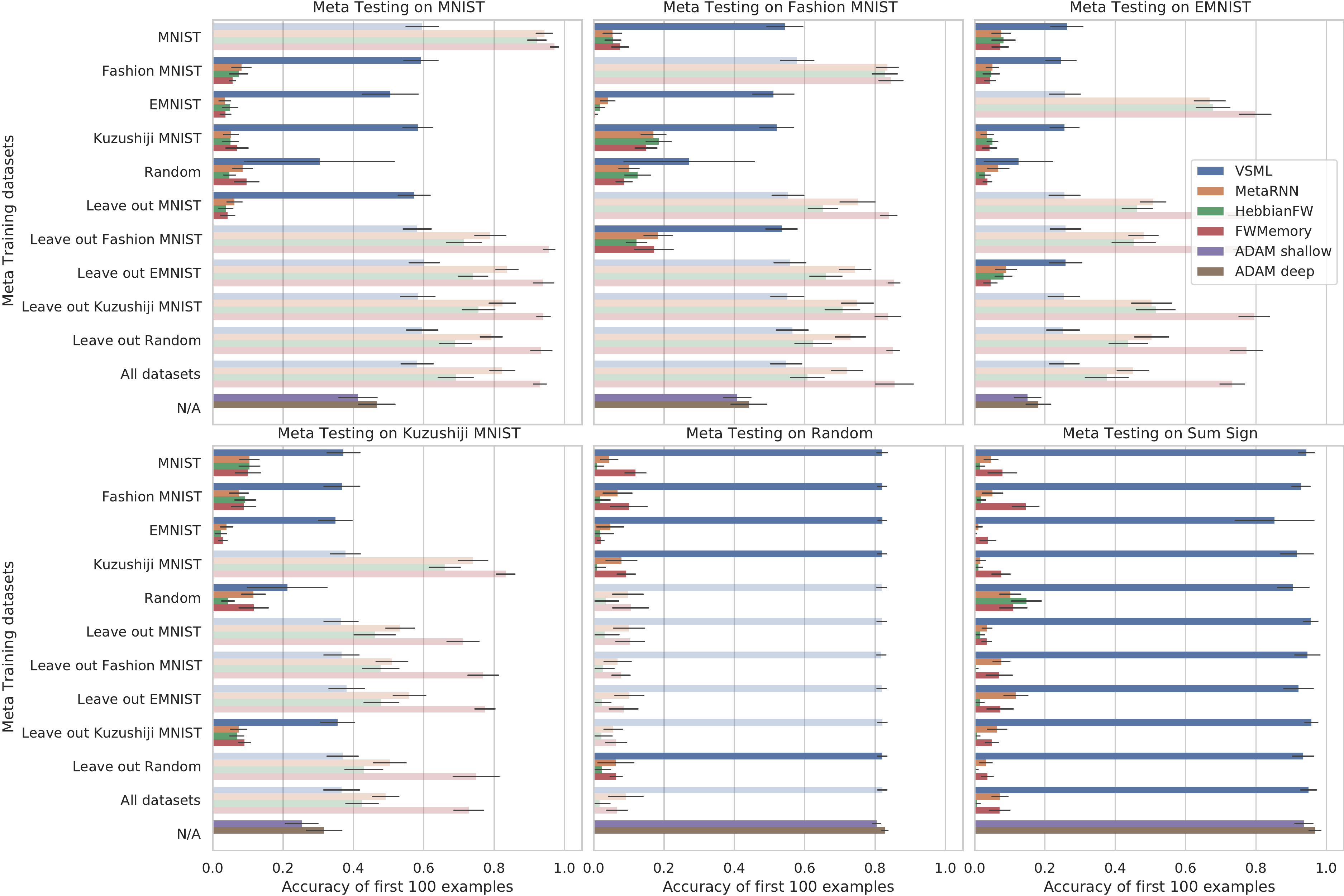}
    \caption{
    Online learning on various datasets.
    Cumulative accuracy in \% after having seen \textbf{100 training examples} evaluated after each prediction starting with random states ({\principle}, {\metarnn}, HebbianFW, FWMemory) or random parameters (SGD).
    Standard deviations are over 32 meta test training runs.
    Meta testing is done on the official test set of each dataset.
    Meta training is on subsets of datasets excluding the Sum Sign dataset.
    Unseen tasks, most relevant from a general-purpose LA perspective, are opaque.
    }
    \label{fig:sample-efficiency}
\end{figure}

\paragraph{Ablation: Projection augmentations}

\begin{figure}
    \centering
    \includegraphics[width=\textwidth]{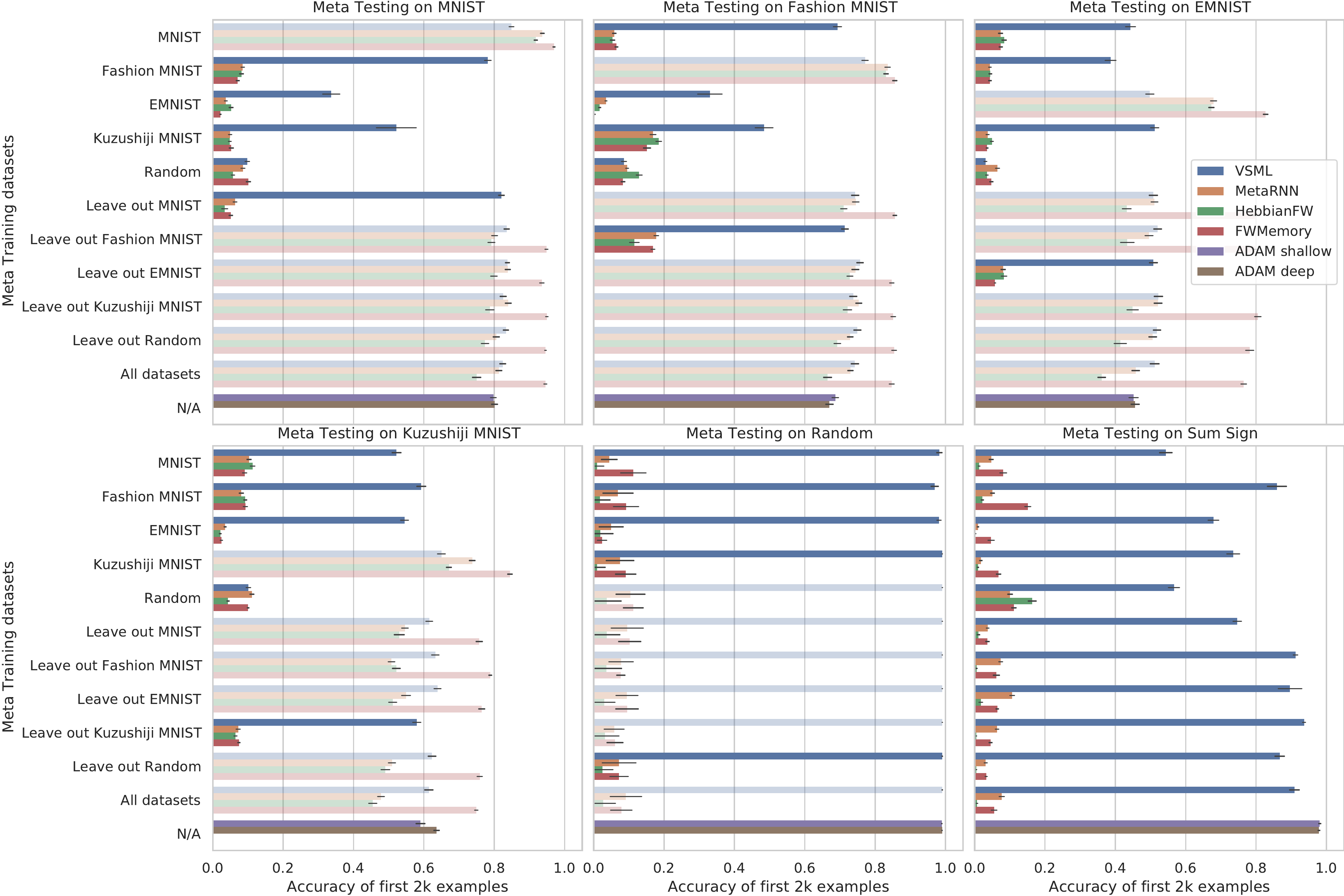}
    \caption{
    Same as figure \autoref{fig:all-results} and \autoref{fig:sample-efficiency} but with accuracies after having seen 2k training examples and \textbf{no random projections for all methods} during meta training.
    }
    \label{fig:not-proj-inputs}
\end{figure}
\begin{figure}
    \centering
    \includegraphics[width=\textwidth]{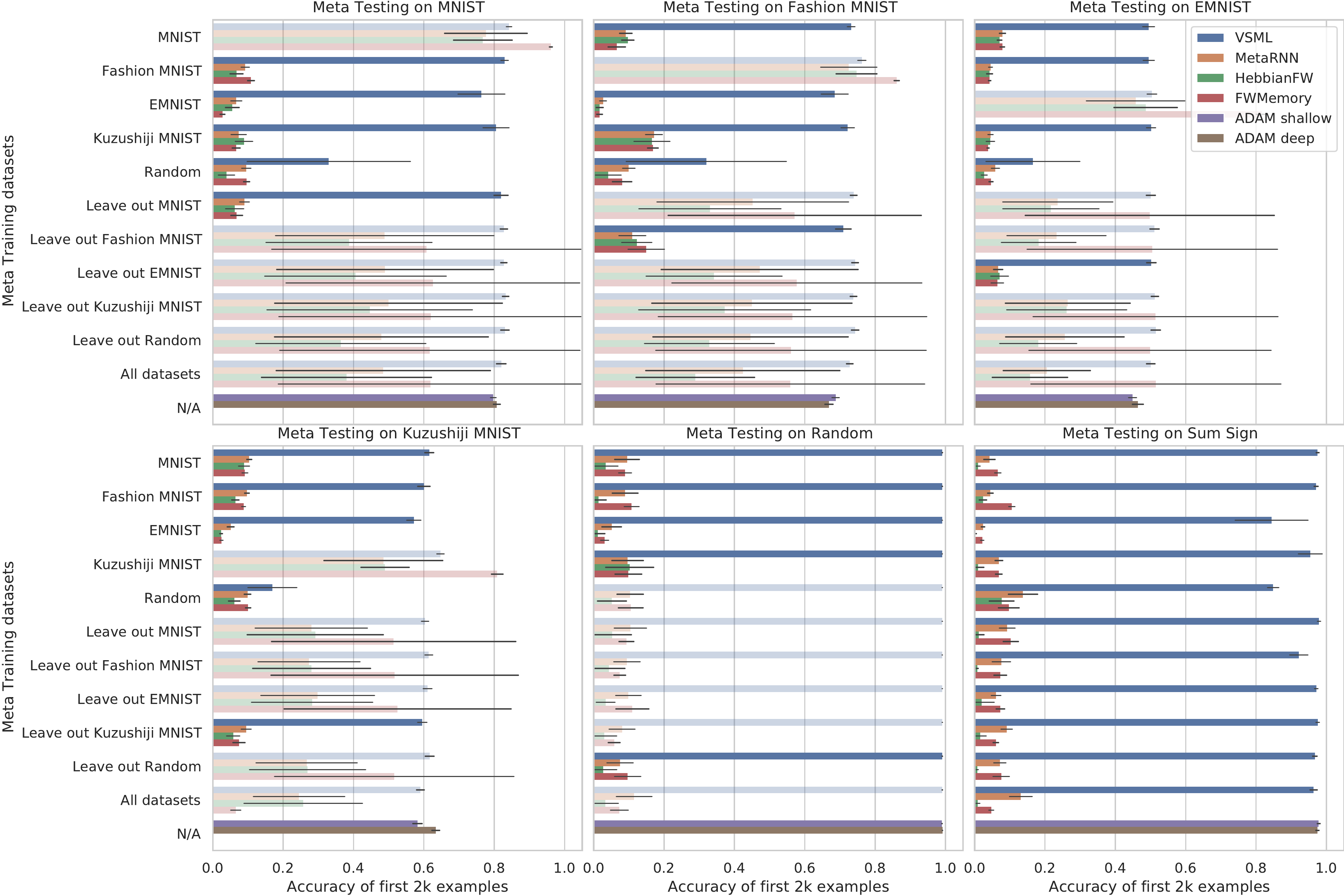}
    \caption{
    Same as figure \autoref{fig:all-results} and \autoref{fig:sample-efficiency} but with accuracies after having seen 2k training examples and \textbf{random projections for all methods including baselines} during meta training.
    }
    \label{fig:proj-inputs}
\end{figure}
In the main text (\autoref{fig:all-results}) we have randomly projected inputs during VSML meta training.
When not randomly projecting inputs (\autoref{fig:not-proj-inputs}), generalization of {\principle} is slightly reduced.
In \autoref{fig:proj-inputs} we have enabled these augmentations for all methods, including the baselines.
While {\principle} benefits from the augmentations, the {\metarnn}, Hebbian fast weights, and external memory baselines do not increase their generalization significantly with those enabled.
In \autoref{fig:mnist-augmentations} we show meta test training curves for both the augmented as well as non-augmented case.

\begin{wrapfigure}{r}{0.5\textwidth}
    \vspace{-5mm}
    \centering
    \includegraphics[width=0.5\columnwidth]{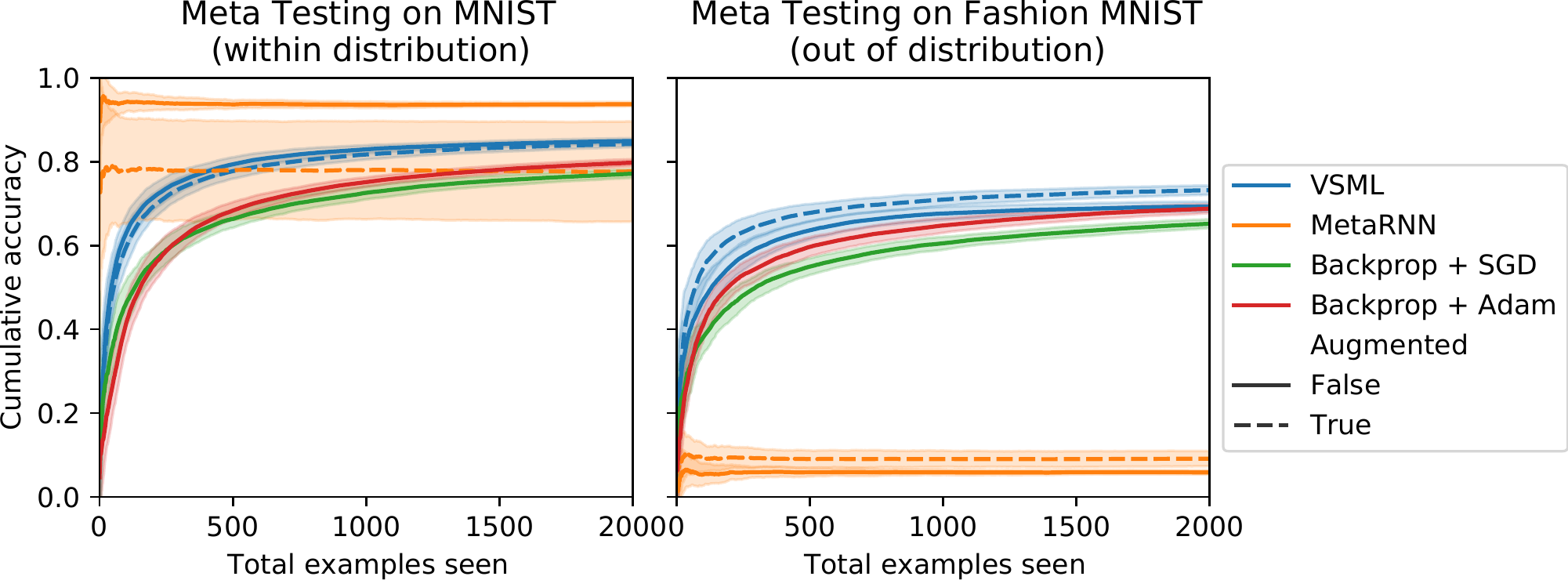}
    \caption{
    On the MNIST meta training example from \autoref{fig:scratch} we plot the effect of adding the random projection augmentation to VSML and the Meta RNN.
    The Fashion MNIST performance (out of distribution) is slightly improved for VSML while the effect on the Meta RNN is limited.
    }
    \label{fig:mnist-augmentations}
    \vspace{-5mm}
\end{wrapfigure}

\paragraph{Introspect longer meta test training run}

\begin{figure}
    \centering
    \includegraphics[width=\textwidth]{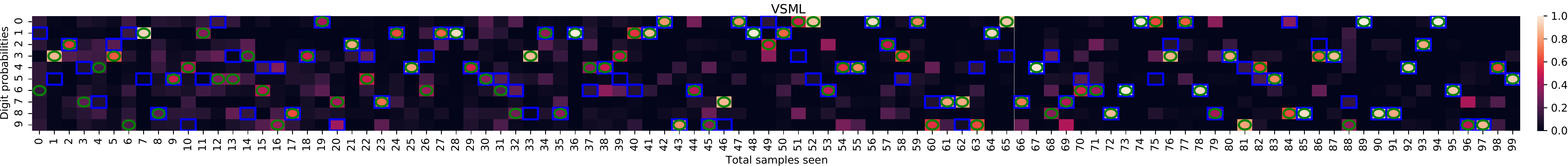}
    \includegraphics[width=\textwidth]{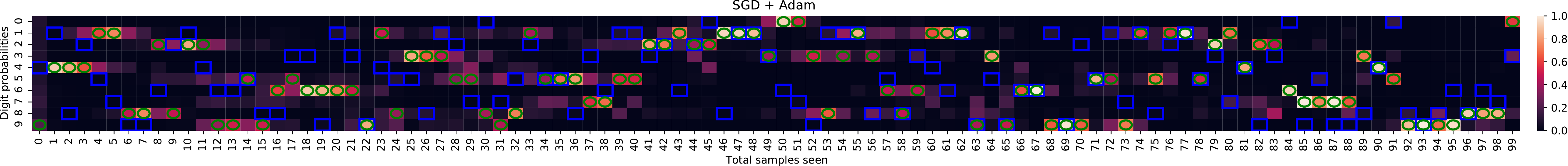}
    \caption{
        Introspection of how output probabilities change after observing an input and its error at the output units when meta testing \textbf{on the full MNIST dataset}.
        We highlight the input class $\color{label} \square$ as well as the predicted class $\color{prediction} \bigcirc$ for 100 examples in sequence.
        The top plot shows the {\sapproach} quickly associating the input images with the right label, generalizing to future inputs.
        The bottom plot shows the same dataset processed by SGD with Adam which learns significantly slower by following the gradient.
    }
    \label{fig:action-introspection-extended}
\end{figure}

Similar to \autoref{fig:action-introspection} we introspect how {\approach} learned to learn after meta training on the MNIST dataset.
In this case, we meta test for 100 steps by sampling from the full MNIST dataset in \autoref{fig:action-introspection-extended} without repeating digits.
Compared to the previous setup, learning is slower as there is a larger variety of possible inputs.
Nevertheless, we observe that {\approach} still associate inputs with their label rather quickly compared to SGD.

\paragraph{Omniglot}

In this paper, we have focused on the objective of meta learning a general-purpose learning algorithm.
Different from most contemporary meta learning approaches we tested the discovered learning algorithm on significantly different datasets to assess its generalization capabilities.
These generalization capabilities may affect the performance on standard few-shot benchmarks such as Omniglot.
In this section, we assess how VSML performs on those datasets where the tasks at meta test time are similar to those during meta training.

\begin{wrapfigure}{r}{0.5\textwidth}
    \centering
    \includegraphics[width=0.49\textwidth]{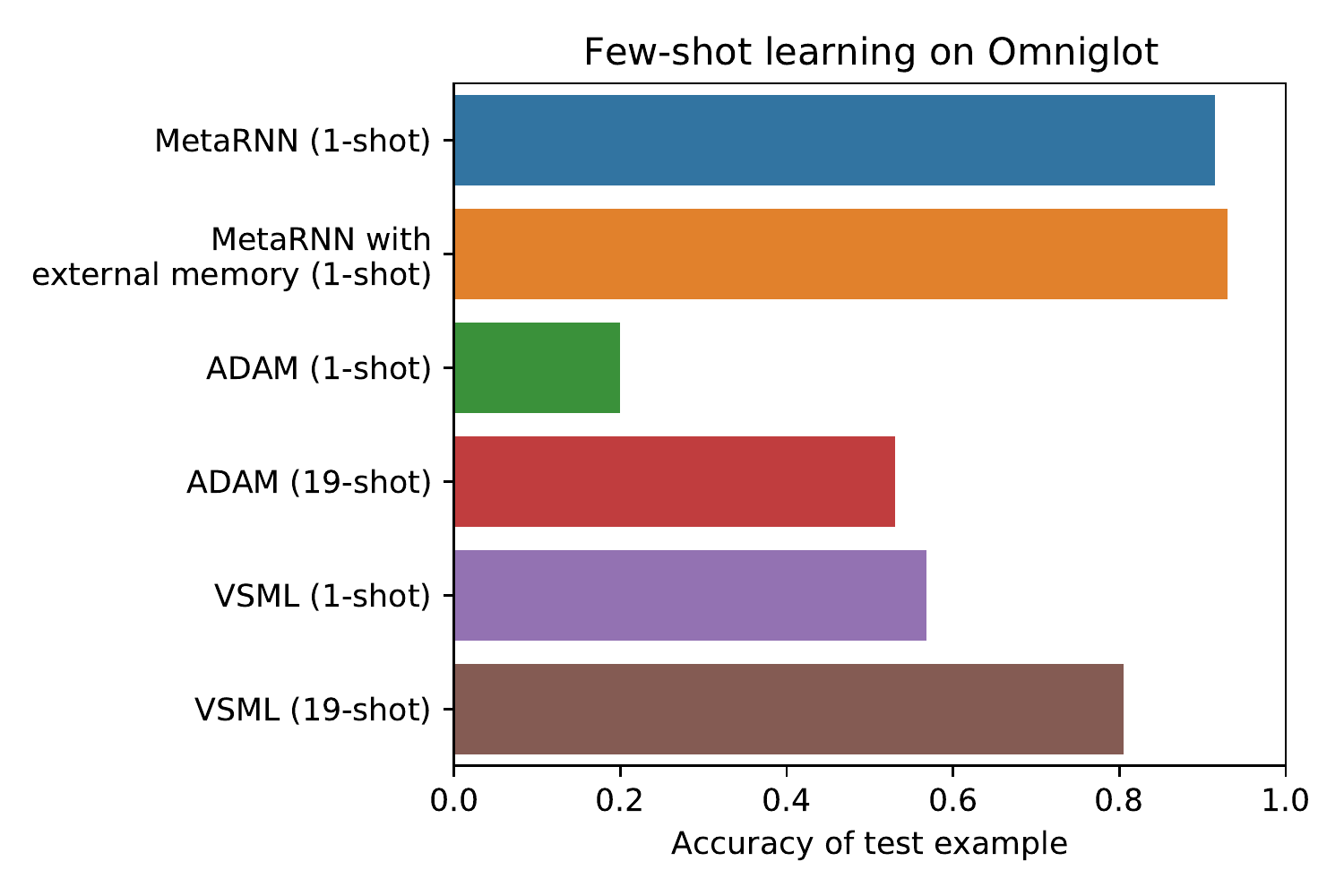}
    \caption{VSML on the Omniglot dataset.}
    \label{fig:omniglot}
\end{wrapfigure}

On Omniglot, our experimental setting corresponds to the common 5-way, 1-shot setting~\citep{Miconi2018DifferentiableBackpropagation}:
In each episode, we select 5 random classes, sample 1 instance each, and show it to the network with the label and prediction error.
Then, we sample a new random test instance from one of the 5 classes and meta train to minimize the cross-entropy on that example.
At meta test time we use unseen alphabets (classes) from the test set and report the accuracy of the test instance across 100 episodes.

The results (\autoref{fig:omniglot}) nicely demonstrate how common baselines such as the Meta RNN~\citep{Hochreiter2001,Duan2016,Wang2016} or a Meta RNN with external memory~\citep{schlag2021learning} work well in an Omniglot setting, yet fail when the gap increases between meta train and meta test, thus requiring stronger generalization (\autoref{fig:scratch}, \autoref{fig:all-results}).
In contrast, VSML generalizes well to unseen datasets, e.g. Fashion MNIST, although it does learn more slowly on Omniglot.
Finally, these new results demonstrate how VSML learns significantly faster on Omniglot compared to SGD with Adam, thus highlighting the benefits of the meta learning approach adopted in this work.

\paragraph{Short horizon bias}\label{sec:short-horizon}
\begin{wrapfigure}{r}{0.5\textwidth}
    \centering
    \includegraphics[width=0.49\textwidth]{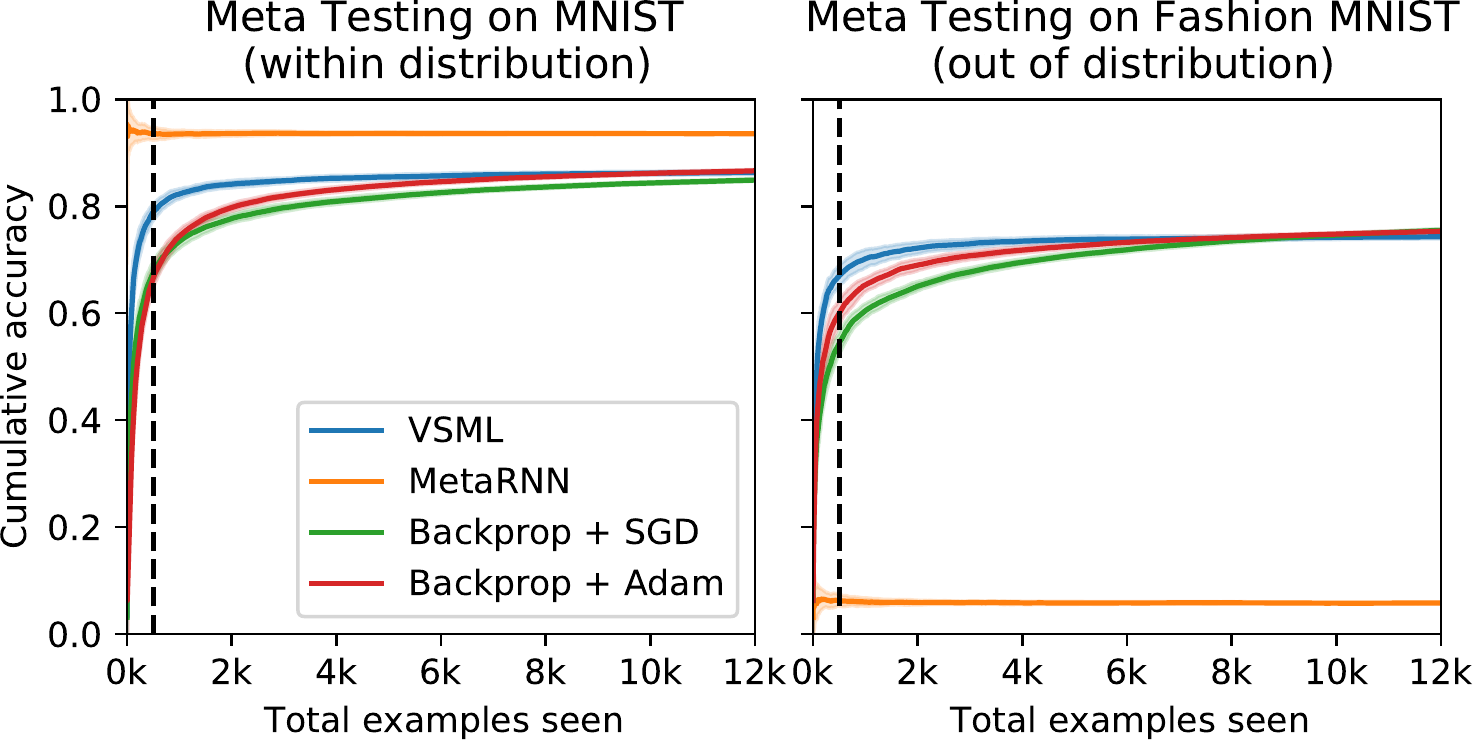}
    \caption{Short horizon bias.}
    \label{fig:short-horizon-bias}
\end{wrapfigure}
In this paper, we have observed that VSML can be significantly more sample efficient compared to backpropagation with gradient descent, in particular for the first few examples.
The longer we unroll the {\approach}, the smaller this gap becomes.
In \autoref{fig:short-horizon-bias} we run VSML for $12,000$ examples ($24,000$ RNN ticks).
From this plot, it is evident that at some point gradient descent overtakes VSML in terms of learning progress.
We call this phenomenon the \emph{short horizon bias}, where meta test training is fast in the beginning but flattens out at some horizon.
In the current version of VSML we only meta optimize the RNN for 500 examples (marked by the vertical dashed line) starting with a random initialization, not explicitly optimizing learning beyond that point, resulting in this bias.
In future work, we will investigate methods to circumvent this bias, for example by resuming from previous states (learning progress) similar to a persistent population in previous meta learning work~\citep{kirsch2020improving}.

\paragraph{Convolutional Neural Networks}\label{sec:cnns}
\begin{figure}
    \centering
    \includegraphics[width=\textwidth]{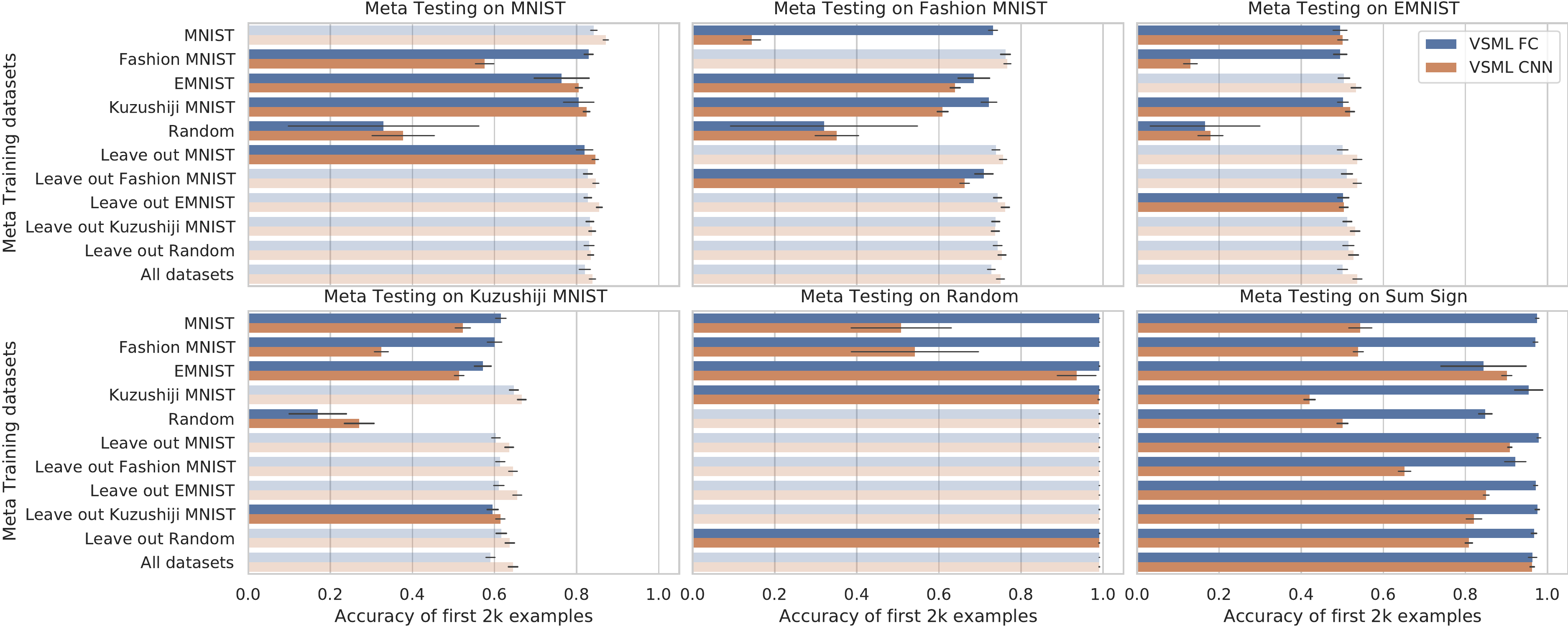}
    \caption{
    Convolutions are competitive to the standard fully connected setup.
    }
    \label{fig:compare-cnn}
\end{figure}
VSML's sub-RNNs can not only be arranged to fully connected layers, but also convolutions.
For this experiment, we have implemented a convolutional neural network (CNN) version of VSML.
This is done by replacing each weight in the kernel with a multi-dimensional RNN state and replacing the kernel multiplications with VSML sub-RNNs.
We use a convolutional layer with kernel size 3, stride 2, and 8 channels, followed by a dense layer.
On our existing datasets, it performs similar to the fully connected architecture, as can be seen in \autoref{fig:compare-cnn}.

\begin{wrapfigure}{r}{0.5\textwidth}
    \centering
    \includegraphics[width=0.49\textwidth]{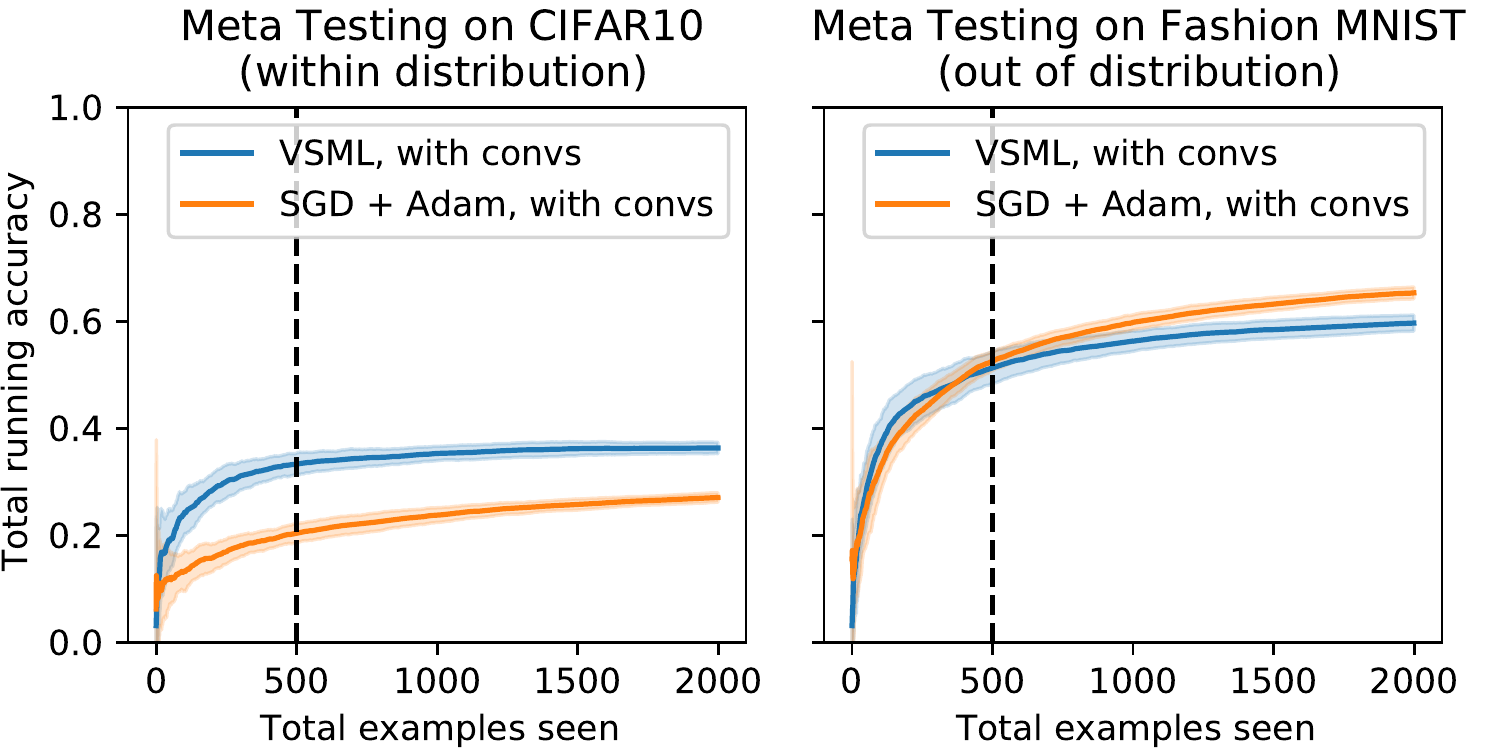}
    \caption{Meta Training on CIFAR10 with a CNN version of VSML.}
    \label{fig:cifar10}
\end{wrapfigure}
We also applied our CNN variant to CIFAR10.
Note that in this paper we are interested in the online learning setting (similar to the one of Meta RNNs).
This is a challenging problem on which gradient descent with back-propagation also struggles.
Many consecutive examples ($> 10^5$ steps) are required for learning.
Online performance is generally lower than in the batched setting which we do not explore here.
When meta training on CIFAR10 (\autoref{fig:cifar10}) we observe that meta test-time learning on CIFAR is initially faster compared to SGD while still generalizing to Fashion MNIST.
On the other hand, with a sufficiently large meta training distribution, we would hope to see a similar generalization to CIFAR10 when CIFAR10 is unseen.
As visible in both plots, learning speed decreases at some point.
This is probably due to the current short-horizon bias as discussed in the previous paragraph.
Future improvements are necessary to further scale VSML to harder learning problems.

\section{Other Training Details}\label{sec:training_details}

\paragraph{LSTM implementation}
We implement the {\sapproach} using $A \cdot B$ LSTMs with forward and backward messages as described in \autoref{eq:message-forward-backward}.
Each LSTM $ab$ at layer $k$ is updated by
\begin{equation}
    z_{ab}^{(k)}, h_{ab}^{(k)} \leftarrow f_{\textrm{LSTM}}(z_{ab}^{(k)}, h_{ab}^{(k)}, \sfmsg_a^{(k)}, \sbmsg_b^{(k)}).
\end{equation}
The functions $\fmsg$ and $\bmsg$ are a linear projection to outputs of size $N' = 8$ and $N'' = 8$ respectively.
The state size is given by $N = 64$ for LA cloning and $N = 16$ for meta learning from scratch.
$A^{(1)}$ and $B^{(K)}$ are fixed according to the dataset input/output size and others are chosen freely as described in the respective experiment.
We found that averaging messages instead of summing them, $\sfmsg_b^{(k)} := \frac{1}{A^{(k-1)}} \sum_{a'} \fmsg(s_{a'b}^{(k-1)})$ and $\sbmsg_a^{(k)} := \frac{1}{B^{(k+1)}} \sum_{b'} \bmsg(s_{ab'}^{(k+1)})$, improves meta training stability.

\textbf{Source code} is available at \url{http://louiskirsch.com/code/vsml}.

\subsection{Learning algorithm cloning}

\paragraph{General training remarks}
During the forward evaluation of layers $1, \ldots, K$ we freeze the LSTM state.
During the backward pass, we only retain two state dimensions that correspond to the weight and the bias.
We also zero all other LSTM input dimensions in $\sfmsg$ and $\sbmsg$ except the ones that encode the input $x$ and error $e$.
We maintain a buffer of {\sapproach} states from which we sample a batch during LA cloning and append one of the new states to the buffer.
This ensures diversity across possible {\sapproach} states during LA cloning.

\paragraph{Batching for {\approach}}
In \autoref{sec:backprop} we optimize a {\sapproach} to implement backpropagation.
To stabilize learning at meta test time we run the RNN on multiple data points (batch size $64$) and then average their states corresponding to $w$ and $b$ as an analogue to batching in standard gradient descent.

\paragraph{Stability during meta testing}
To prevent exploding states during meta testing we also clip the LSTM state between $-4$ and $4$.

\paragraph{Bounded states in LSTMs}
In LSTMs the hidden state is bounded between $(-1, 1)$.
For learning algorithm cloning, we'd like to support weights and biases beyond this range.
This can be circumvented by choosing a constant, here $4$, by which we scale $w$ and $b$ down to store them in the context state.
This is only relevant during learning algorithm cloning.

\subsection{Meta learning from scratch}

\paragraph{Hyperparameter search strategy}
The {\principle} hyper-parameters were searched using wandb's~\citep{biewald2020} Bayesian search during development.
Parameters that lead to stable meta learning on MNIST were chosen.
The final parameters were not further tuned and doing so may lead to additional performance gains.
For the {\metarnn} we picked parameters that matched {\sapproach} as much as possible.
For our SGD and SGD with Adam baselines, we performed a grid search over the learning rate on MNIST to find the best learning rates.

\paragraph{Meta Training}
Meta training is done across 128 GPUs using ES as proposed by OpenAI~\citep{Salimans2017} for a total of 10k steps.
We use a population size of 1024, each population member is evaluated on one trajectory of 500 online examples.
We use noise with a fixed standard deviation of $0.05$.
To apply the estimated gradient, we use Adam with a learning rate of $0.025$ and betas set to $0.9$ and $0.999$.
We have run similar experiments (where GPU memory is sufficient) with distributed gradient descent on 8 GPUs which led to less stable training but qualitatively similar results with appropriate early stopping and gradient clipping.

\paragraph{{\sapproach} architecture}
Each sub-RNN has a state size of $N = 16$ and messages are sized $N' = N'' = 8$.
We only use a single layer between the input and prediction, thus $A$ equals the flattened input image dimension and $B = 10$ for the predicted logits.
The outputs are squashed between $\pm 100$ using tanh.
We run this layer two ticks per input.
States are initialized randomly from independent standard normals.

\paragraph{SGD baseline architecture and learning rate}
The deep SGD baseline uses a hidden layer of size $160$, resulting in approximately $125k$ parameters on MNIST to match the number of state dimensions of the {\sapproach}.
We use a tanh activation function to match the LSTM setup.
The tuned learning rate used for vanilla SGD is $10^{-2}$ and $10^{-3}$ for Adam.

\paragraph{{\metarnn} baseline}
We use an LSTM hidden size of $16$ and an input size of $|\mathrm{image}| + |\mathrm{error}|$ where $|\mathrm{error}|$ corresponds to the output size.
Inputs are padded to be equal size across all meta training datasets.
This results in about $100k$ to $150k$ parameters.

\paragraph{Hebbian fast weight baseline}
We compare to a Hebbian fast weight baseline as described in \citet{Miconi2018DifferentiableBackpropagation} where a single layer is adapted using learned synaptic plasticity.
A single layer is adapted using Oja's rule by feeding the prediction errors and label as additional inputs.

\paragraph{Specialization through RNN coordinates}
In addition to the recurrent inputs and inputs from the interaction term, each sub-RNN can be fed its coordinates $a, b$, position in time, or position in the layer stack.
This may allow for (1) specialization, akin to the specialization of biological neurons, and (2) for implicitly meta learning neural architectures by suppressing outputs of sub-RNNs based on positional information.
In our experiments, we have not yet observed any benefits of this approach and leave this to future work.

\paragraph{Meta learning batched LAs}
In our meta learning from scratch experiments, we discovered online learning algorithms (similar to Meta RNNs~\citep{Hochreiter2001,Wang2016,Duan2016}).
We demonstrated high sample efficiency but the final performance trails the one of batched SGD training.
In future experiments, we also want to investigate a batched variant.
Every tick we could average a subset of each state $s_{ab}$ across multiple parallel running {\approach}.
This would allow for meta learning batched LAs from scratch.

\paragraph{Optimizing final prediction error vs sum of all errors}
In our experiments we are interested in sample efficient learning, i.e., the model making good predictions as early as possible in training.
This is encouraged by minimizing the sum of all prediction errors throughout training.
If only good final performance is desired, optimizing solely final prediction error or a weighting of prediction errors is an interesting alternative to be investigated in the future.

\paragraph{Recursive replacement of weights}
Variable sharing in NNs by replacing each weight with an LSTM introduces new meta variables $V_M$.
Those variables themselves may be replaced again by LSTMs, yielding a multi-level hierarchy with arbitrary depth.
We leave the exploration of such hierarchies to future work.

\paragraph{Alternative sparse shared weight matrices}
In this paper, we have focused on a version of VSML where the sparse shared weight matrix is defined by many RNNs that pass messages.
Alternative ways of structuring variable sharing and sparsity may lead to different kinds of learning algorithms.
Investigating these alternatives or even searching the space of variable sharing and sparsity patterns are interesting directions for future research.

\paragraph{Meta Testing algorithm}
Meta testing corresponds to unrolling the {\approach}.
The learning algorithm is encoded purely in the recurrent dynamics.
See \autoref{alg:meta_testing} for pseudo-code.
\begin{algorithm}[H]
    \centering	
    \begin{algorithmic}	
        \Require Dataset $D = \{(x_i, y_i)\}$, LSTM parameters $V_M$
        \State $V_L = \{s_{ab}^{(k)}\} \leftarrow$ initialize LSTM states $\quad \forall a,b,k$
        \For{$(x, y) \in \{(x_1, y_1), \ldots, (x_T, y_T)\} \subset D$} \Comment{Inner loop over $T$ examples}
            \State $\sfmsg^{(1)}_{a1} := x_a \quad \forall a$ \Comment{Initialize from input image x}
            \For{$k \in \{1, \ldots, K\}$} \Comment{Iterating over $K$ layers}
                \State $s_{ab}^{(k)} \leftarrow f_{RNN}(s_{ab}^{(k)}, \sfmsg_a^{(k)}, \sbmsg_b^{(k)}) \quad \forall a,b$ \Comment{\autoref{eq:message-forward-backward}}
                \State $\sfmsg_b^{(k+1)} := \sum_{a'} \fmsg(s_{a'b}^{(k)}) \quad \forall b$ \Comment{Create forward message}
                \State $\sbmsg_a^{(k-1)} := \sum_{b'} \bmsg(s_{ab'}^{(k)}) \quad \forall a$ \Comment{Create backward message}
            \EndFor
            \State $\hat y_a := \sfmsg_{a1}^{(K+1)} \quad \forall a$ \Comment{Read output}
            \State $e := \nabla_{\hat y} L(\hat y, y)$ \Comment{Compute error at outputs using loss $L$}
            \State $\sbmsg_{b1}^{(K)} := e_b \quad \forall b$ \Comment{Input errors}
        \EndFor
    \end{algorithmic}	
    \caption{{\principle}: Meta Testing\label{alg:meta_testing}}
\end{algorithm}

\section{Other relationships to previous work}\label{sec:relationships}

\subsection{VSML as distributed memory}

Compared to other works with additional external memory mechanisms~\citep{sun1991neural,mozer1993connectionist,Santoro,Mishra,schlag2021learning}, VSML can also be viewed as having memory distributed across the network.
The memory writing and reading mechanism implemented in the meta variables $V_M$ is shared across the network.

\subsection{Connection to modular learning}

Our sub-LSTMs can also be framed as \emph{modules} that have some shared meta variables $V_M$ and distinct learned variables $V_L$.
Previous works in modular learning~\citep{Shazeer2017,rosenbaum2018routing,Kirsch2018modular} were motivated by learning experts with unique parameters that are conditionally selected to suit the current task or context.
In contrast, VSML has recurrent modules that share the same parameters $V_M$ to resemble a learning algorithm.
There is no explicit conditional selection of modules, although it could emerge based on activations or be facilitated via additional attention mechanisms.

\subsection{Connection to self-organization and complex systems}
In self-organizing systems, global behavior emerges from the behavior of many local systems such as cellular automata~\citep{codd2014cellular} and their recent neural variants~\citep{mordvintsev2020growing,sudhakaran2021growing}.
VSML can be seen as such a self-organizing system where many sub-RNNs induce the emergence of a global learning algorithm.

\end{document}